\journal{elsevier}
\newcommand\Vector[1]{\bm{#1}}
\newcommand\ve{{\Vector{e}}}
\newcommand\MATRIX[1]{\bm{#1}}
\newcommand\mZ{{\MATRIX{Z}}}
\newcommand\mTheta{{\MATRIX{\Theta}}}
\newcommand\gA{{\mathcal{A}}}
\newcommand\gL{{\mathcal{L}}}
\newcommand\vecn{\mathrm{vec}}
\newcommand\tp{\mathrm{pool}}
\DeclareMathOperator*{\argmax}{arg\,max}
\DeclareMathOperator*{\argmin}{arg\,min}
\newtheorem{theorem}{Theorem}[section]
\newtheorem{lemma}{Lemma}[section]
\newcommand{\tabincell}[2]{\begin{tabular}{@{}#1@{}}#2\end{tabular}}
\begin{document}
\title{\bf Adaptive Classification of Interval-Valued Time Series}

\author[1]{Wan Tian}
\ead{wantian61@foxmail.com}

\author[1,2]{Zhongfeng Qin\corref{cor1}}
\ead{qin@buaa.edu.cn}

\cortext[cor1]{Corresponding author}
\address[1]{School of Economics and Management, Beihang University, Beijing 100191, China}
\address[2]{Key Laboratory of Complex System Analysis, Management and Decision (Beihang University), Ministry of Education, Beijing 100191, China}

\begin{abstract}

In recent years, the modeling and analysis of interval-valued time series have garnered significant attention in the fields of econometrics and statistics. However, the existing literature primarily focuses on regression tasks while neglecting classification aspects. In this paper, we propose an adaptive approach for interval-valued time series classification. Specifically, we represent interval-valued time series using convex combinations of upper and lower bounds of intervals and transform these representations into images based on point-valued time series imaging methods. We utilize a fine-grained image classification neural network to classify these images, to achieve the goal of classifying the original interval-valued time series. This proposed method is applicable to both univariate and multivariate interval-valued time series. On the optimization front, we treat the convex combination coefficients as learnable parameters similar to the parameters of the neural network and provide an efficient estimation method based on the alternating direction method of multipliers (ADMM). On the theoretical front, under specific conditions, we establish a margin-based multiclass generalization bound for generic CNNs composed of basic blocks involving convolution, pooling, and fully connected layers. Through simulation studies and real data applications, we validate the effectiveness of the proposed method and compare its performance against a wide range of point-valued time series classification methods.

\end{abstract}
\begin{keyword}
ADMM \sep classification \sep margin-based generalization \sep imaging
\end{keyword}

\maketitle	
\section{Introduction} \label{sec1}
Interval-valued time series have attracted significant attention in the fields of statistics and econometrics in recent years \cite{arroyo2006introducing, san2007imlp, maia2008forecasting, han2012autoregressive, gonzalez2013constrained, han2016vector}, as they can simultaneously capture variation and level information. In practical applications, interval-valued time series are quite common. For example, in macroeconomics, the minimum and maximum annualized monthly GDP growth rates form interval-valued data for annual GDP growth rate. In meteorology, interval-valued time series are widely used to describe daily weather conditions, such as pollutant concentrations and temperature. Other common examples include determining the bid and ask prices during a trading period, inflation rates, and short-term and long-term interest rates. In general, interval-valued time series modeling offers two main advantages over point-valued time series \cite{han2016vector}. Firstly, within the same time period, interval-valued time series contain more variation and level information \cite{han2012autoregressive, gonzalez2013constrained, han2016vector}, which means that modeling interval-valued time series can lead to more efficient estimation and powerful inference. Secondly, specific disturbances, which may be considered noise in point-valued time series modeling and have adverse effects, can be addressed through modeling interval-valued time series. 

Over the past three decades, numerous methods for modeling and analyzing univariate and multivariate interval-valued time series, particularly focusing on regression, have been proposed. For example, \citet{maia2008forecasting} represented interval-valued time series using bivariate central and range point-valued time series, and separately applied autoregressive model, autoregressive integrated moving average model, artificial neural network, or combinations of these models to model these point-valued time series, aiming to achieve the purpose of modeling the original interval-valued time series. In the framework of Interval arithmetic, \citet{arroyo2007exponential} extended the classical exponential smoothing methods to interval-valued time series, and compared them with interval multilayer perceptron and other classical methods under the representations of upper-lower bounds and center-range. Similarly, \citet{san2007imlp} proposed an interval multilayer perceptron model based on interval arithmetic, with both input and output of the model being intervals. However, the weights and biases of the model remained point-valued. All of the above methods essentially represented intervals using bivariate point values and then modeled them using point-based methods. Recently, there have been some studies that treated intervals as a whole entity. \citet{han2012autoregressive} introduced the concept of extended random intervals, upon which they developed an autoregressive conditional interval model and a corresponding minimum distance estimation method. Regarding multivariate interval-valued time series, \citet{han2016vector} proposed an interval-valued vector autoregressive moving average model based on the extended random interval concept introduced by \citet{han2012autoregressive}. They also established a minimum distance estimation method for the model parameters and provided a theory of consistency, asymptotic normality, and asymptotic efficiency of the estimator.

In the field of supervised learning, classification and regression are two subtasks with equal importance. However, to the best of our knowledge, very few studies have attempted to address the issue of classification for interval-valued time series.  Certainly, a few classification methods for interval-valued data have been proposed. For example, \citet{palumbo1999non} generalized factorial discriminant analysis to interval-valued scenario and proposed a three-stage discrimination procedure. \citet{rasson2000symbolic} introduced the Bayesian discriminant rule for interval-valued data based on class prior probabilities and kernel density estimation. \citet{qi2020interval} proposed a unified representation framework for interval-valued data and applied various machine learning models for classification. Clearly, the above methods were not specifically designed for interval-valued time series, and did not take into account the sequential dependency characteristics of the data.

For univariate and multivariate point-valued time series, the classification
spanning across statistical learning methods and deep learning-based approaches has been
extensively studied \cite{ismail2019deep, abanda2019review, ruiz2021great}. These methods have demonstrated remarkable performance on public datasets and real-world applications \cite{bagnall2017great}, and can be roughly categorized into five types: distance-based methods \cite{bagnall2017great, lines2015time, cuturi2017soft, lucas2019proximity}, dictionary-based methods \cite{schafer2015boss, schafer2016scalable, schafer2017fast}, Shapelets-based methods \cite{schafer2017fast}, ensemble learning-based methods \cite{bagnall2017great, lucas2019proximity, Bagnall2015TimeSeriesCW}, and deep learning-based methods \cite{ismail2019deep, Wang2016TimeSC, Cui2016MultiScaleCN, IsmailFawaz2019AdversarialAO}. The distance-based methods primarily rely on specific distance metrics to measure the similarity between time series. Among these methods, the combination of Dynamic Time Warping and nearest neighbor classifiers has been dominant in the past \cite{bagnall2017great}. Currently, the state-ofthe-art algorithm is the Elastic Ensemble  \cite{lines2015time}, but its high training complexity restricts its application on large-scale data. The dictionary-based methods classify time series based on the frequency of repetition of certain subsequences. Among this class of methods, the Bag-of-SFA-Symbols method \cite{schafer2015boss} stands out prominently. The Shapelets-based methods identify specific classes by using relatively short and repetitive subsequences. The ensemble learning methods construct different classifiers on various time series representations and combine them together. The method known as Hierarchical Vote Collective of Transformation-Based Ensembles \cite{Lines2016HIVECOTETH} has achieved state-of-the-art performance on the UCR archive \cite{Dau2018TheUT}. Deep learning has attracted considerable attention in
time series mining recently due to its excellent representation capability, and a commonly used method is InceptionTime \cite{IsmailFawaz2019InceptionTimeFA}, a time series classifier that ensemble five deep learning models. More classification methods for point-valued time series can be found in \cite{ismail2019deep, abanda2019review, ruiz2021great}.

However, the above effective methods have not been effectively generalized and applied to interval-valued time series. In this paper, we propose an adaptive classification method for interval-valued time series, which combines imaging methods and deep learning techniques. We represent interval-valued time series by taking the convex combination of upper bound point-valued time series and lower bound point-valued time series, and transform these representations into images using the point-valued time series imaging method Recurrence Plot (RP) \cite{eckmann1995recurrence} for classification, aiming to achieve the classification of the original interval-valued time series. Considering the high similarity within and between classes in the obtained image dataset, we select a fine-grained image classification network as the classifier. Taking into account the constraints imposed on the convex combination coefficients, we formalize the parameter estimation problem as a constrained optimization problem. During the optimization process, we treat the convex combination coefficients as learnable parameters similar to network parameters and provide an efficient parameter estimation method based on alternating direction method of multipliers (ADMM) \citep{Boyd2011DistributedOA}. The proposed method is also applicable to multivariate interval-valued time series by simply replacing RP with the multivariate time series imaging method called Joint Recurrence Plot (JRP) \cite{thiel2004much}. Theoretically, under specific conditions, we establish a margin-based multiclass generalization bound for generic CNNs composed of basic blocks involving convolution, pooling, and fully connected layers. The efficacy of the proposed methodologies is showcased through simulation studies encompassing various data generation processes and real-world data applications.

The structure of this paper is outlined as follows. Section \ref{sec2method} introduces the proposed adaptive classification methods for both univariate and multivariate interval-valued time series, along with the parameter estimation method based on ADMM in Section \ref{sec3ADMM}. In Section \ref{sec4Theoretical}, we present a margin-based multiclass generalization bound for generic CNNs. Simulation studies and real-world data applications are provided in Sections \ref{sec5simulate} and \ref{sec6rreal}, respectively. Concluding remarks are presented in Section \ref{sec7}, and all theoretical proofs are documented in \ref{appendixA}.

\section{Methodologies} \label{sec2method}
In this section, we introduce the proposed adaptive classification methods for univariate and multivariate interval-valued time series. As the subject of classification, a stochastic interval-valued time series \(\{X_t\}^T_{t=1}\) is a sequence of interval-valued random variables indexed by time \(t\). The interval-valued time series can be represented using bivariate point-valued time series, such as upper and lower bounds, i.e., \(X_t = [X^l_t, X^u_t]\).

\subsection{Classification of univariate interval-valued time series} \label{subsec21}
Given a labeled dataset of univariate interval-valued time series \(S_1 = \{(X_i,Y_i)\}^n_{i=1}\), where \(X_i = (X_{i,t})^T_{t=1} = ([X^l_{i,t}, X^u_{i,t}])^T_{t=1}\) represents the univariate interval-valued time series of length \(T\), and \(Y_i\) is its corresponding label. Interval-valued time series can also be represented using center and range, i.e., \(X_{i, t} = (X^c_{i,t}, X^r_{i,t})\), with \(X^c_{i,t} = (X^l_{i,t} + X^u_{i,t}) / 2\) and \(X^r_{i,t} = (X^u_{i,t} - X^l_{i,t}) / 2\). Our objective is to construct a classification model with excellent generalization performance. 

To utilize the point-valued time series imaging and classification methods, we need to represent intervals with representative points. In general, interval can be represented using various representative points, such as the center, range, upper bound, lower bound, or other internal points. However, using the same point representation for interval at different times in interval-valued time series may not be the most discriminative choice. For example, if we represent intervals using the upper bound and then classify the dataset based on the upper bound point-valued time series, it may not achieve optimal classification performance. This has inspired our research on adaptive classification of interval-valued time series, where different points are used to represent intervals at different times. Specifically, we represent intervals using convex combinations of upper and lower bounds. This approach also offers interpretability advantages as convex combinations always lie within the interval.

The inputs in dataset \(S_1\) can be represented by the following two point-valued matrices,
\[
X^l = (X^l_{i, t})_{1 \leq i \leq n, 1 \leq t\leq T }, \ X^u = (X^u_{i, t})_{1 \leq i \leq n, 1 \leq t\leq T },
\]
where \(X^l\) and \(X_u\) are matrices composed of the upper and lower bounds of intervals, respectively.
For instance, the first row of \(X^l\) and \(X^u\) consists of the upper and lower bounds of the interval-valued time series \(X_1\), and so forth. Then, the convex combination matrix \(C = (C_{i,t})_{1 \leq i \leq n, 1 \leq t\leq T }\) of all inputs can be represented as:
\[
C = (C_1,\cdots, C_n)^\top =  X^l \text{diag}(\alpha_1, \cdots, \alpha_T) + X^u \text{diag}(1-\alpha_1, \cdots, 1-\alpha_T) \in \mathbb{R}^{n\times T},
\] 
where \(\alpha = (\alpha_1, \alpha_2, \cdots, \alpha_T) \in [0,1]^T\) are convex combination coefficients that need to be estimated, the \((i, t)\)-th element in matrix \(C\) can be represented as \(C_{i,t} = \alpha_{t}X^l_{i,t} + (1-\alpha_t)X^u_{i,t}\), and \(\text{diag}(\alpha_1, \alpha_2, \cdots, \alpha_T)\) is a diagonal matrix with \(\alpha_1, \alpha_2, \cdots, \alpha_T\) as its diagonal elements. When all convex combination coefficients are either 0 or 1, the convex combination matrix is composed of the upper bound time series or the lower bound time series respectively. When the convex combination coefficients are all 1/2, the convex combination matrix is composed of the central time series. 

In this case, we can directly apply point-valued time series classification methods to dataset \(\{(C_i,Y_i)\}^n_{i=1}\) to achieve the classification goal of the original interval-valued time series dataset. In order to effectively leverage computer vision techniques from deep learning to enhance the classification performance of interval-valued time series, we further employ the point-valued time series imaging method RP to transform convex combination time series into an image. We should note that there are multiple methods available to transform time series into images, such as Gramian Angular Summation/Difference Field, and Markov Transition Field \cite{wang2015imaging}, each capturing different features of the time series from different perspectives. Recurrence is a fundamental characteristic of dynamic systems, and it can effectively describe the behavior of a system in phase space \cite{eckmann1995recurrence}. On the other hand, RP is a powerful tool for visualizing and analyzing such behavior and has a good informationtheoretic interpretation \cite{eckmann1995recurrence}. This is the reason why we chose RP as the imaging method.

For convex combination time series \(C_i\), RP first extracts trajectories of length \(m\) and time delay \(\kappa\) as follows, 
\[
\vec{C}_{i,j} = \left(C_{i,j}, C_{i,j+\kappa}, \cdots, C_{i,j + (m-1)\kappa}\right), \ j = 1,2,\cdots, T - (m-1)\kappa,
\]
then, RP defines the distance between pairs of trajectories of \(C_i\) as
\begin{equation} \label{imagingpro}
R_i(j, k) = H(\epsilon_i - \lVert \vec{C}_{i,j} -\vec{C}_{i,k}\rVert_2),\ j,k = 1,2,\cdots, T - (m-1)\kappa, 
\end{equation}
where \(H(\cdot)\) represents the Heaviside function, \(\lVert \cdot \rVert_2\) denotes Euclidean norm, and \(\epsilon_i\) is the threshold value that needs to be specified, which may vary for each convex combination time series. From the imaging process of RP, it can be seen that the threshold value is a crucial hyperparameter determining the amount of information extracted. For instance, if the threshold value is set to a value greater or smaller than the distance between any two trajectories, the resulting RP will be a matrix with all ones or all zeros. This indicates that very little or no pattern information is extracted from the time series. Various strategies can be employed to select the threshold, among which the commonly used approach is based on the quantiles of distances. Using the imaging method RP, we can transform the original interval-valued time series dataset \(S_1\) into an image dataset \(I_1 = \{(R _i,Y_i)\}^n_{i=1}\) with \(R_i  = (R_i(j, k))_{1 \leq  j, k\leq T-(m-1)\kappa}\). We proceed to use a convolutional neural network (CNN) to classify \(I_1\) to achieve the goal of classifying the original interval-valued time series. We discuss the selection of specific network architectures in Section \ref{sec5simulate}.

\subsection{Classification of multivariate interval-valued time series}
In this section, we extend the multivariate point-valued time series imaging method JRP to multivariate interval-valued time series, and perform discriminative analysis based on this extension. Similar to Section \ref{subsec21}, we have a given labeled multivariate interval-valued time series dataset denoted by \(S_2 = \{(Z_i, Y_i)\}^{n}_{i = 1}\), which is used for classification. In the dataset \(S_2\), {for each \(i\) with \(1 \leq i \leq n\), the input} \(Z_i = (Z_{i, j, t})_{1\leq j \leq p, 1\leq t \leq T}\) is an multivariate interval-valued time series with a length of \(T\) and a dimension of \(p\). Using simple interval arithmetic, the \(i\)-th input \(Z_i\) can be represented as
\[
Z^l_i = (Z^l_{i, j, t})_{1\leq j \leq p, 1\leq t \leq T},\ Z^u_i = (Z^u_{i, j, t})_{1\leq j \leq p, 1\leq t \leq T}, 
\]
where \(Z^l_i\) and \(Z^u_i\) are respectively composed of the upper and lower bounds of the multivariate interval-valued time series \(Z_i\). 

Similar to handling univariate interval-valued time series, we represent the interval using convex combinations of its upper and lower bounds,
\[
D_i = (D_{i, j, t})_{1\leq j \leq p, 1\leq t \leq T} = \text{diag}(\alpha_1, \alpha_2, \cdots, \alpha_p) Z^l_i + \text{diag}(1-\alpha_1, 1-\alpha_2, \cdots, 1-\alpha_p) Z^u_i, 
\]
where \(\alpha = (\alpha_1, \alpha_2, \cdots, \alpha_p) \in [0,1]^p\) are convex combination coefficients. Many discriminative methods for multivariate point-valued time series can be directly applied to the dataset \(\{(D_i,Y_i)\}^n_{i = 1}\) to achieve the classification of \(S_2\). However, in our approach, we additionally employ JRP for imaging \(\{(D_i,Y_i)\}^n_{i = 1}\), and then use models such as CNNs for classifying.

JRP is an extension of RP in the context of multivariate point-valued time series. {It} first uses RP to imaging the point-valued time series for each individual dimension, and then utilizes Hadamard product to consolidates information across different dimensions. To be more specific, for the \(j\)-th dimension of the \(i\)-th convex combination matrix \(D_i\), the RP initially extracts trajectories of length \(m\) with a time delay \(\kappa\),
\[
\vec{D}_{i,j,k} = (D_{i,j, k}, D_{i,j, k + \kappa}, \cdots,D_{i,j, k + (m-1)\kappa}),\ k = 1,2, \cdots, T-(m-1)\kappa,
\]

Then, RP calculates the distances between pairs of trajectories on the {\(j\)-th dimension},
\[
R_{i, j}(k, h) = H(\epsilon_{i,j}- \lVert \vec{D}_{i,j,k} - \vec{D}_{i,j,h} \rVert_2),\ k, h = 1,2,\cdots, T - (m-1)\kappa,
\]
where \(\epsilon_{i,j}\) refers to the threshold value of the \(j\)-th dimension for the \(i\)-th convex combination matrix. Finally, JRP consolidates information from different dimensions by
\[
R_i = R_{i, 1} \odot R_{i, 2} \odot \cdots \odot R_{i, p},
\]
and this is the image corresponding to the \(i\)-th convex combination matrix. Directly, we can classify the imaging dataset \(I_2 = \{(R_i,Y_i)\}^n_{i = 1}\) to achieve the goal of classifying the original multivariate interval-valued time series dataset. 

By utilizing the convex combination of upper and lower bounds and the point-value time series imaging method, we can transform the classification problem of interval-valued time series dataset (denoted as \(S_1\) or \(S_2\)) into a classification problem of image dataset \(I = \{(R_i,Y_i)\}^n_{i = 1}\) (denoted as \(I_1\) or \(I_2\)), thereby enhancing classification performance. However, it is important to note that the convex combination coefficients are unknown, and we need effective optimization methods to estimate them.

\section{Alternating direction method of multipliers} \label{sec3ADMM}
In this section, we discuss how to effectively estimate the parameters of the neural network and the convex combination coefficients. We choose a CNN-type neural network, denoted as \(f(\cdot; \Theta)\), to classify the image dataset \(I\), where \(\Theta\) represents the parameters of the network. The specific form of the loss function is discussed in the theoretical part of Section \ref{sec4Theoretical}. The quality of classification is measured by an appropriate loss function \(\ell(f(R; \Theta), Y)\), and the smaller the loss, the better the classification performance. Empirical risk minimization is the most commonly used criterion for estimating model parameters. Taking into account the constraints imposed on convex combination coefficients, our ultimate optimization objective is
\begin{equation} \label{unopt}
\min_{\alpha, \Theta} \gL(\alpha, \Theta)  =  \frac{1}{n}\sum_{i=1}^{n}\ell(f(R_i; \Theta),Y_i), \text{ subject to } \alpha \in \gA,
\end{equation}
where \(\gA\) represents the constraint set for convex combination coefficients, applicable to both univariate and multivariate interval-valued time series, denoted as \([0, 1]^T\) and \([0, 1]^p\), respectively.

For this constrained optimization problem (\ref{unopt}), we can effectively solve it using the ADMM algorithm. Specifically, we first introduce an auxiliary variable \(\beta\) with the same dimension as \(\alpha\), then rewrite the optimization problem (\ref{unopt}) as follows:
\begin{equation} \label{unoptre2}
	\min \mathcal{L}(\alpha, \Theta) + g(\beta),\ \text{ subject to } \alpha - \beta = 0,
\end{equation}
where \(g(\beta)\) is the indicator function of \(\gA\), that is
\[
g(\beta) = \begin{cases}
	0,\ \ \ \ \  \beta \in \gA, \\
	+\infty,\  \text{others}.
\end{cases}
\]

The augmented Lagrangian function for (\ref{unoptre2}) with multiplier \(\lambda\) is
\begin{equation} \label{arglaga}
L_\rho(\alpha, \beta, \Theta, \lambda) = \mathcal{L}(\alpha, \Theta) + g(\beta) + \lambda^\top(\alpha - \beta) + (\rho/2)\lVert \alpha - \beta \rVert^2_2.
\end{equation}

Given four sets of parameters \(\Theta^s, \alpha^s, \beta^s\), and \(\lambda^s\) obtained from the \(s\)-th iteration, the \((s+1)\)-th iteration of the ADMM algorithm consists of the following four steps:
\begin{equation} \label{oriadmm}
\begin{cases}
	\Theta^{s+1} = \argmin\limits_{\Theta} L_\rho(\alpha^s, \beta^s, \Theta, \lambda^s),\\
	\alpha^{s+1} = \argmin\limits_{\alpha} L_\rho(\alpha, \beta^{s}, \Theta^{s+1}, \lambda^s),\\
	\beta^{s+1} = \argmin\limits_{\beta} L_\rho(\alpha^{s+1}, \beta, \Theta^{s+1}, \lambda^s),\\
	\lambda^{s+1} = \lambda^{s} + \tau \rho (\alpha^{s+1} - \beta^{s+1}), 
\end{cases}	
\end{equation} 
where \(\tau\) is the step size and typically belongs to the interval \((0, (1 + \sqrt{5})/2]\). Defining the primal residual \(r = \alpha - \beta\), we express the last two terms on the right-hand side of (\ref{arglaga}) as
\[
\begin{aligned}
\lambda^\top r + (\rho/2)\lVert r \rVert^2_2 &= (\rho/2)\lVert r + (1/\rho)\lambda\rVert^2_2 - (\rho/2)\lVert (1/\rho)\lambda\rVert^2_2\\
&=(\rho/2)\lVert r + u\rVert^2_2 - (\rho/2)\lVert u\rVert^2_2,\\
\end{aligned}
\]
where \(u = (1/\rho)\lambda\) is the scaled dual variable. 

By using the scaled dual variable and ignoring the terms unrelated to the iteration, we can express the iteration procedure (\ref{oriadmm}) as follows:
\begin{align}
\Theta \text{ step: }\Theta^{s+1} &= \argmin\limits_{\Theta} \mathcal{L}(\alpha^s, \Theta), \label{iterd1} \\
\alpha \text{ step: } \alpha^{s+1} &= \argmin\limits_{\alpha} \mathcal{L}(\alpha, \Theta^{s+1})	+ (\rho/2)\lVert \alpha - \beta^s + u^s\rVert^2_2, \label{iterd2}\\
\beta \text{ step: }\beta^{s+1}  &=\Pi_\mathcal{C}(\alpha^{s+1} + u^s), \label{iterd3}\\
u \text{ step: } u^{s+1} &= u^{s} + \tau(\alpha^{s+1} - \beta^{s+1}), \label{iterd4}
\end{align}
where \(\Pi_\mathcal{A}(\alpha^{s+1} + u^s)\) represents the projection of \(\alpha^{s+1} + u^s\) onto \(\mathcal{A}\). We repeat the above cycle until convergence. The \(\beta\) step and \(u\) step in the above iterative process can be easily executed. In the following, we discuss the computation of the \(\Theta\) step and the \(\alpha\) step in detail.

For the \(\Theta\) step, given \(\alpha = \alpha^s\), the convex combination of upper and lower bounds is determined, and consequently, the imaging dataset is also determined. Therefore, we can directly use common optimization algorithms such as stochastic gradient descent (SGD) or its variants \cite{sun2019survey} to update the neural network parameters until convergence.

For the \(\alpha\) step, given \(\Theta = \Theta^{s+1}\), meaning that the parameters of the neural network are fixed, we still use SGD to update the convex combination coefficients. According to (\ref{iterd2}), we primarily need to obtain the gradient of \(\mathcal{L}(\alpha, \Theta^{s+1})\) with respect to \(\alpha\). We know that the convex combination coefficients \(\alpha\) are mainly used during the imaging process; however, the Heaviside function \(H(\cdot)\) is non-differentiable. To address this non-differentiability issue, we use the following function to smoothly approximate \(H(\cdot)\):
\begin{equation} \label{smoothapproximation}
H(x) = \lim_{\nu \to \infty} (1 + \tanh(\nu x))/2,
\end{equation}
where \(\tanh(\cdot)\) represents the hyperbolic tangent function. Figure \ref{app} is the graph of the function \((1 + \tanh(\nu x))/2\) for different values of \(\nu\). It is evident that larger values of \(\nu\) generally provide a better approximation of \(H(\cdot)\). In subsequent empirical analysis, we compared the impact of different approximation levels on classification performance.
\begin{figure}[H]
\centering 
\includegraphics[scale=1]{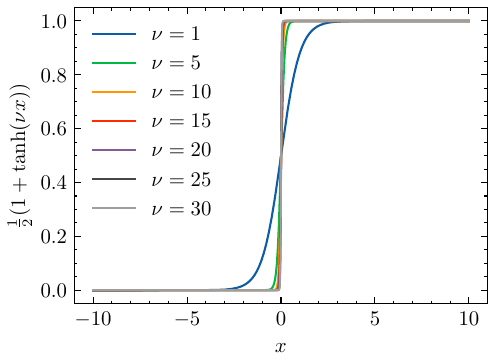}
\caption{The graph of \((1 + \tanh(\nu x))/2\) for various values of \(\nu\).} \label{app}
\end{figure}

Below, we discuss the computation of the \(\Theta\) step using a univariate interval-valued time series as an example. According to the chain rule of differentiation, we can obtain the partial derivatives of the loss function \(\mathcal{L}(\alpha, \Theta^{s+1})\) with respect to each convex combination coefficient as follows:
\begin{equation}\label{derall}
\frac{\partial \mathcal{L}(\alpha, \Theta^{s+1})}{\partial \alpha_t} = \frac{1}{n}\sum_{i=1}^{n} \sum_{j, k} \frac{\partial R_i(j, k)}{\partial \alpha_t} \frac{\partial \mathcal{L}(\alpha, \Theta^{s+1})}{\partial R_{i}(j, k)},\ t = 1,2,\cdots, T. 
\end{equation}

Then, we need to calculate the specific expressions \(\partial R_{i}(j, k) / \partial \alpha_t\) and \(\partial \mathcal{L}(\alpha, \Theta^{s+1}) / \partial R_{i}(j, k)\). For the first partial derivative, it follows from the definition of \(R_{i}(j, k)\) and the smooth approximation of the Heaviside function, we can deduce the following {that}
\begin{equation} \label{partida}
\begin{aligned}
\frac{\partial R_{i}(j, k)}{\partial \alpha_t}  &= \lim_{\nu \to \infty} \frac{1}{2}\frac{\partial (1 + \tanh(\nu(\epsilon_i - \lVert \vec{C}_{i,j} -\vec{C}_{i, k} \rVert_2 )))}{\partial \alpha_t}\\
&= \lim_{\nu \to \infty} -\frac{\nu}{1+\cosh(\nu(\epsilon_i - \lVert \vec{C}_{i,j} -\vec{C}_{i, k} \rVert_2))} \frac{\partial \lVert \vec{C}_{i,j} -\vec{C}_{i, k} \rVert_2}{\partial \alpha_t},
\end{aligned}
\end{equation}
where \(\cosh(\cdot)\) is the hyperbolic cosine function. Depending on the values of \(j\), \(k\), and \(t\), the partial derivatives \(\partial \lVert \vec{C}_{i,j} -\vec{C}_{i, k} \rVert_2 / \partial \alpha_t\) have different expressions, which are provided in \ref{appendixB}. The computation of \(\partial \mathcal{L}(\alpha, \Theta^{s+1}) / \partial R_i(j, k) \) requires consideration of the structure of the CNNs. In general, we can utilize automatic differentiation libraries such as PyTorch (\url{https://pytorch.org/}) for rapid computation. To simplify the analysis, we assume that the CNNs used consist solely of convolutional layers, each comprising convolution, non-linear activation, and pooling operations sequentially, followed by a fully connected layer. The detailed computation process of the first convolutional layer is as follows: 
\[
\begin{aligned}
X_{(1)}(:, :, j) &= Z_{(0)}  \circledast \Theta^j_{(1)} \in \mathbb{R}^{M_1\times N_1	}, \ j= 1,2,\cdots, d_1,\\
Y_{(1)} &=  \sigma_1(X_{(1)}) \in \mathbb{R}^{M_1 \times N_1 \times d_1},\\
Z_{(1)} &= \text{Pool}_{(1)}(Y_{(1)}) \in \mathbb{R}^{\widetilde{M}_{1} \times \widetilde{N}_{1} \times  d_1},
\end{aligned}
\] 
where \(Z_{(0)} = R\) represents the input to the first layer. The notation {\(X_{(1)}(:, :, j)\)} denotes the \(j\)-th feature map of the first layer. The parameter \(\Theta^j_{(1)} \in \mathbb{R}^{k_1 \times k_1}\) represents the \(j\)-th convolutional kernel of size \(k_1 \times k_1\), with a total of \(d_1\) kernels in the first layer. The operations \(\circledast, \text{Pool}_{(1)}(\cdot)\), and \(\sigma_{1}(\cdot)\) denote the convolutional operation, pooling operation, and non-linear activation function (e.g., the Rectified Linear Unit (ReLU)) in the first layer, respectively. Then we have
\[
\begin{aligned} 
\frac{\partial \mathcal{L}(\alpha, \Theta^{s+1})}{\partial R_i} & = \sum_{j=1}^{d_1}\text{rot180}(\Theta^{j}_{(1)}) \widetilde{\circledast} \frac{\partial \mathcal{L}(\alpha, \Theta^{s+1})}{\partial X_{(1)}(:, :, j)}\\
&= \sum_{j=1}^{d_1}\text{rot180}(\Theta^j_{(1)}) \widetilde{\circledast} \delta^{(1, j)},
\end{aligned}
\]
where \(\text{rot180}(\cdot)\) represents a 180-degree rotation, \(\delta^{(1, j)}\) denotes the error term of the \(j\)-th feature map in the first layer, which can be backpropagated to update the parameters \(\Theta\), \(\widetilde{\circledast}\) refers to wide convolution, which means that the convolution operation is performed after padding zeros on both sides of the feature map.

With the above discussion, we can compute the derivatives of the loss function \(\mathcal{L}(\alpha,\Theta^{s+1})\) with respect to the convex combination coefficients, which can then be used to update these coefficients.

\section{Theoretical Results}\label{sec4Theoretical}
In this section, we present the margin-based multiclass generalization bound for the proposed method, which is of utmost interest in statistical learning. Before that, we briefly review some developments in the related field and introduce some basic notations.

Generalization bounds can quantify how well the learning process can generalize to unseen data. Current research primarily constructs the generalization bounds of neural network models from the perspectives of nonparametric statistics \citep{he2020recent, hu2021model, bauer2019deep, bartlett2021deep,neyshabur2017pac, fan2022noise} and functional analysis \citep{pinkus1999approximation, zhou2020theory, fang2020theory, guhring2020error, mao2021theory}. Our paper focuses on constructing generalization bounds from a statistical perspective. Additionally, existing research can also be categorized based on whether the generalization bounds are related to the network structure (e.g., the number of layers, the width of each layer, activation functions, etc.) \citep{he2020recent, duan2023fast}. For instance, \citet{duan2023fast} constructs generalization bounds for parametric and non-parametric regression from the perspective of deep representation, ignoring network structure. The advantage of constructing generalization bounds related to network structure is that it allows theoretical analysis of which components of the network structure may impact the final classification performance. This is a significant advantage for neural networks, which lack interpretability. This is also the focus of our paper. 

A classic approach to constructing generalization bounds begins by establishing bounds on Rademacher complexity. Subsequently, the complexity of the hypothesis space, such as its covering number or VC dimension, is used to bound the Rademacher complexity \citep{geer2000empirical, van2000asymptotic, gine2021mathematical, bartlett2005local}. Common types of Rademacher complexity include global Rademacher complexity \citep{zhang2023mathematical} and offset Rademacher complexity \citep{liang2015learning}, where the latter is a penalized version of the former. Next, a crucial question is how to quantify the complexity of the hypothesis space (in our paper, specific structured CNNs), such as bounds on the covering number.

Without loss of generality, assume \((R_1,Y_1), (R_2,Y_2),\cdots, (R_n,Y_n)\) are independent and identically distributed copies of \((R, Y)\) under an unknown joint measure, where \(R \in \mathbb{R}^{M_0\times N_0}\) is the input image and \(Y \in \{1,2,\cdots, K\}\) is its corresponding label. Similar to the parameter estimation based on the ADMM algorithm in Section \ref{sec3ADMM}, we assume that the network \(f(R; \Theta): \mathbb{R}^{M_0\times N_0} \to\mathbb{R}^K\) used for classification consists of \(L\) convolutional layers and one fully connected layer. Each convolutional block comprises convolution, non-linear activation, and pooling operations. Here, \(\Theta = \{\Theta_{(1)}, \Theta_{(2)}, \cdots, \Theta_{(L)}, W_{L+1}\}\), where \(\Theta_{(l)}\) represents the parameters of the \(l\)-th convolutional layer, and \(W_{L+1}\) represents the parameters of the fully connected layer, i.e., the \((L+1)\)-th layer. Of course, we can consider other network structures, but the corresponding bounds on the covering number would be different. 

The input and output of the \(l\)-th layer are \(Z_{(l-1)} \in \mathbb{R}^{\widetilde{M}_{l-1} \times \widetilde{N}_{l-1} \times d_{l-1}}\) and \(Z_{(l)} \in \mathbb{R}^{\widetilde{M}_{l} \times \widetilde{N}_{l} \times d_{l}}\), respectively, the computation process can be described as follows,
\begin{equation} \label{cnnlyer}
\begin{aligned}
X_{(l)}(:, :, j) &= \Theta^j_{(l)} \circledast Z_{(l-1)} \in \mathbb{R}^{M_l\times N_l}, \ j= 1,2,\cdots, d_l,\\
Y_{(l)} &=  \sigma_l(X_{(l)}) \in \mathbb{R}^{M_l \times N_l \times d_l},\\
Z_{(l)} &= \text{Pool}(Y_{(l)}) \in \mathbb{R}^{\widetilde{M}_{l} \times \widetilde{N}_{l} \times  d_l},
\end{aligned}
\end{equation}
where \(X_{(l)}(:, :, j)\) represents the \(j\)-th feature map of the \(l\)-th layer, \(\Theta^j_{(l)} \in \mathbb{R}^{k_l  \times k_l	\times d_{l-1}}\) refers to the \(j\)-th convolutional kernel with a size of \(k_l \times k_l\), and the \(l\)-th layer consists of a total of \(d_l\) kernels, \(\circledast, \text{Pool}_{(l)}(\cdot)\), and \(\sigma_{l}(\cdot)\) denote the convolutional, pooling, and non-linear activation operations in the \(l\)-th layer, respectively. In particular, \(Z_{(0)} = R\) represents the input image. The last layer is a fully connected layer and is formulated by
\[
f(R; \Theta) =  W_{(L+1)} \text{vec}(Z_{(L)}) \in \mathbb{R}^{K},
\] 
where \(\text{vec}(Z_{(L)})\) represents the vectorization of the output of the last convolutional layer, and \(W_{(L+1)} \in \mathbb{R}^{C \times \widetilde{M}_{L} \widetilde{N}_{L} d_{L}}\) is the weight matrix. 

The network output \(f(R; \Theta)\) is converted into a class label by taking the argmax over its components, which is denoted as \(R \mapsto \argmax_j f(R; \Theta)_j\). For the analysis of margin-based generalization error bounds in multiclass classification, we need to revisit the margin operator, which is defined {by}
\[
\mathcal{M}: \mathbb{R}^K \times \{1,2,\cdots, K\}, \ (V, Y) \mapsto V_Y - \max_{Y^\prime\neq Y} V_{Y^\prime},
\]
and the ramp loss \(\ell(x): \mathbb{R}\mapsto \mathbb{R}^+\) defined as
\[
\ell(x)= \begin{cases}
0, x < -\gamma\\
1+ x/\gamma, -\gamma \leq x \leq 0\\
1, x>0,
\end{cases}
\]
where \(\gamma\) is a predetermined constant. From the definition of the margin operator, if we use the softmax criterion for final classification, it describes the size of the classification gap. It is evident that a larger value of the margin operator indicates better classification performance of the model. Using the ramp loss defined above, the expected risk and its empirical counterpart for the classification model are as follows:
\[
\mathcal{R}(f) = \mathbb{E}(\ell(-\mathcal{M}(f(R; \Theta), Y))), \  \widehat{\mathcal{R}}(f) = \frac{1}{n}\sum_{i=1}^{n}\ell(-\mathcal{M}( f(R_i; \mTheta),Y_i)).
\]

It is evident that the empirical risk \(\widehat{\mathcal{R}}(f)\) is equivalent to the optimization objective in (\ref{unopt}). It should be noted that the image dataset is related to the convex combination coefficients. Let \(\alpha^*\) and \(\Theta^*\) be the convex combination coefficients and network parameters estimated based on the optimization objective (\ref{unopt}), respectively, i.e.,
\[
\alpha^*, \Theta^* \in \argmin_{\alpha, \Theta} \widehat{\mathcal{R}}(f),\ \text{subject to  } \alpha \in \mathcal{A}.
\]

Let \(f^*\) be the abbreviation for \(f(R; \Theta^*)\). In the following, we discuss how to construct the generalization bound of the estimated model while considering the network structure, i.e., the upper bound of
\begin{equation} \label{targenbound}
\mathbb{P}\left(\argmax_{j}f(R; \Theta^*)_j \neq Y\right) -  \widehat{\mathcal{R}}(f^*).
\end{equation}

It should be noted that in (\ref{targenbound}), the input image \(R\) are related to the convex combination coefficients \(\alpha^*\). Therefore, when constructing the generalization bound, we need to consider both \(\alpha^*\) and \(\Theta^*\). Considering these two sets of parameters significantly increases the complexity of constructing the generalization bound. To address this issue, we first establish a generalization bound that holds for any \(\alpha \in \gA\) and \(\Theta\). This naturally applies to the combination of \(\alpha^*\) and \(\Theta^*\). Without loss of generality, we assume that the convex combination coefficients are fixed, which also means that the input images are determined. Therefore, we discuss below how to construct the upper bound of the following expression,
\begin{equation} \label{targenboundreal}
\mathbb{P}\left(\argmax_{j}f(R; \Theta)_j \neq Y\right) -  \widehat{\mathcal{R}}(f).
\end{equation}

Similar to the classic methods discussed above, we first bound expression (\ref{targenboundreal}) using Rademacher complexity, and then we use the covering number of the hypothesis space to bound the Rademacher complexity. Prior to this, we formalize the hypothesis space based on the assumptions made about the network structure. Specifically, for the first \(l\) convolutional layers, the corresponding hypothesis space is
\[
\mathbb{F}_l = \left\{\tp_{(l)}(\sigma_{l}(\Theta_{(l)}\circledast \cdots \tp_{(1)}(\sigma_{1}(\Theta_{(1)} \circledast Z_{(0)}) \cdots))):\lVert W_{(j)} \rVert_\sigma \leq b_j, j = 1,2,\cdots, l\right\},
\]
where \(W_{(j)} \in \mathbb{R}^{k_jk_jd_{j-1} \times d_j} \) represents the matrix obtained by vectorizing the convolution kernels \(\Theta^1_{(j)},\Theta^2_{(j)}, \cdots, \Theta^{d_j}_{(j)}\) at the \(j\)-th layer. Specific details are provided in \ref{appendixA}. \(\lVert \cdot \rVert_\sigma\) denotes the spectral norm. Note that in formalizing the hypothesis space \(\mathbb{F}_l\), we assume the weight matrix has a bounded spectral norm for each layer. This assumption is widely employed in the literature for the theoretical analysis of neural networks \citep{he2020recent}. Considering that the final layer of a CNNs is typically a fully connected layer, we further define the complete hypothesis space as
\[
\mathbb{F} =  \left\{\sigma_{L+1}(W_{(L+1)}\text{vec}(g)): g\in \mathbb{F}_{L}, \lVert W_{(L+1)} \rVert_\sigma \leq b_{L+1}\right\},
\]
where \(\text{vec}(\cdot)\) represents the vectorization operation. For example, for a matrix \(A \in \mathbb{R}^{s\times t}\), we have \(\text{vec}(A) = (A(:, 1)^\top, A(:, 2)^\top, \cdots, A(:, t)^\top) \in \mathbb{R}^{st}\), and for a tensor \(B \in \mathbb{R}^{s \times t \times q} \), we have \(\text{vec}(B) = (\text{vec}(B(:,:, 1))^\top, \text{vec}(B(:,:, 2))^\top, \cdots, \text{vec}(B(:,:, q))^\top) \in \mathbb{R}^{stq}\).
 
Before proceeding further, leveraging the ramp loss function and the defined function space of CNNs, we introduce a new space defined as
\[
\mathcal{H}_\gamma = \left\{(R, Y) \mapsto \ell_\gamma(-\mathcal{M}(f(R), Y)): f\in \mathbb{F}\right\}.
\]

Clearly, \(\mathcal{H}_\gamma\) utilizes functions from \(\mathbb{F}\) to transform input images and their corresponding labels into respective losses. In this paper, we establish theoretical results based on the global Rademacher complexity, defined as
\[
\Re(\mathcal{H}_\gamma|_I) = \mathbb{E}_{I}\left[\widehat{\Re}(\mathcal{H}_\gamma|_I)\right],\ \widehat{\Re}(\mathcal{H}_\gamma|_{I}) = \mathbb{E}_u \left[\sup_{f\in \mathbb{F}}
\frac{1}{n} \sum_{i=1}^{n}u_i\ell_\gamma(-\mathcal{M}(f(R_i), Y_i))\right], ,
\]
where \(\widehat{\Re}(\mathcal{H}_\gamma|_{I})\) is the empirical counterpart of \(\Re(\mathcal{H}_\gamma|_I)\), \(u = (u_1, u_2,\cdots, u_n)\), \(u_1, u_2,\cdots, u_n\) are independent and identically distributed (i.i.d.) Rademacher variables. Based on the above notations, we first establish the following relationship between margin-based generalization error and Rademacher complexity.

\begin{lemma}\label{generalizationRademacher}
Given dataset \(I = \{(R_i, Y_i)\}^n_{i=1}\), and any margin \(\gamma > 0\), for any \(\delta \in (0, 1)\), with probability at least \(1-\delta\), every \(f \in \mathbb{F}\) satisfies
\[
\mathbb{P}\left(\argmax_{j}f(R; \Theta)_j \neq Y\right) -  \widehat{\mathcal{R}}(f) \leq  2\Re(\mathcal{H}_\gamma|_I) + \sqrt{\frac{\log (1/\delta)}{2n}}
\]
and
\[
\mathbb{P}\left(\argmax_{j}f(R; \Theta)_j \neq Y\right) -  \widehat{\mathcal{R}}(f) \leq  2  \widehat{\Re}(\mathcal{H}_\gamma|_{I})+ 3\sqrt{\frac{\log (2/\delta)}{2n}}.
\]
\end{lemma}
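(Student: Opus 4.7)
The plan is to reduce the statement to a standard uniform deviation bound over the class $\mathcal{H}_\gamma$ of bounded losses, and then apply the classical McDiarmid + symmetrization machinery. The whole argument is routine once we observe that the misclassification indicator is dominated pointwise by the ramp loss of the negated margin.

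\textbf{Step 1 (indicator-to-ramp-loss domination).} First I would note that the event $\{\arg\max_j f(R;\Theta)_j \neq Y\}$ forces some index $j' \neq Y$ with $f(R;\Theta)_{j'} \geq f(R;\Theta)_Y$, hence $\mathcal{M}(f(R;\Theta), Y) \leq 0$, i.e.\ $-\mathcal{M}(f(R;\Theta), Y) \geq 0$. From the piecewise definition of $\ell_\gamma$ it is immediate that $\mathbf{1}\{x \geq 0\} \leq \ell_\gamma(x)$ for every $x \in \mathbb{R}$. Combining these gives
\[
\mathbb{P}\!\left(\arg\max_{j}f(R;\Theta)_j \neq Y\right) \;\leq\; \mathbb{E}\bigl[\ell_\gamma(-\mathcal{M}(f(R;\Theta),Y))\bigr] \;=\; \mathcal{R}(f),
\]
so the left-hand side of the lemma is dominated by $\mathcal{R}(f) - \widehat{\mathcal{R}}(f)$, and it suffices to control this latter quantity uniformly over $f \in \mathbb{F}$, equivalently over $h \in \mathcal{H}_\gamma$.

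\textbf{Step 2 (McDiarmid on the supremum).} Define $\Phi(I) := \sup_{h \in \mathcal{H}_\gamma} \bigl(\mathbb{E}[h(R,Y)] - \tfrac{1}{n}\sum_{i=1}^n h(R_i,Y_i)\bigr)$. Since $\ell_\gamma \in [0,1]$, every $h \in \mathcal{H}_\gamma$ takes values in $[0,1]$, so replacing one datum $(R_i,Y_i)$ with an independent copy perturbs $\Phi$ by at most $1/n$. McDiarmid's bounded-differences inequality then yields, with probability at least $1-\delta$,
\[
\Phi(I) \;\leq\; \mathbb{E}[\Phi(I)] + \sqrt{\tfrac{\log(1/\delta)}{2n}}.
\]
The standard symmetrization trick (introducing a ghost sample and i.i.d.\ Rademacher signs $u_i$) gives $\mathbb{E}[\Phi(I)] \leq 2\Re(\mathcal{H}_\gamma|_I)$, which combined with Step 1 yields the first inequality of the lemma.

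\textbf{Step 3 (from expected to empirical Rademacher complexity).} For the second inequality, I would apply McDiarmid a second time to the empirical Rademacher complexity $\widehat{\Re}(\mathcal{H}_\gamma|_I)$, again seen as a function of $(R_1,Y_1),\dots,(R_n,Y_n)$ with bounded differences of size at most $1/n$ (since each $h$ is $[0,1]$-valued and the suprema inside differ by at most $1/n$). This gives $\Re(\mathcal{H}_\gamma|_I) \leq \widehat{\Re}(\mathcal{H}_\gamma|_I) + \sqrt{\log(2/\delta)/(2n)}$ with probability at least $1-\delta/2$. A union bound with the event from Step 2 (now run at level $\delta/2$) produces the second inequality with the constant $3$ on the confidence term.

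\textbf{Expected main obstacle.} None of the three steps is individually delicate; the only point that deserves care is ensuring that the uniform deviation argument applies to the class $\mathcal{H}_\gamma$ rather than directly to $\mathbb{F}$. Because $\ell_\gamma$ is $(1/\gamma)$-Lipschitz and the margin operator is $1$-Lipschitz in $f$, one could alternatively invoke Talagrand's contraction lemma to relate $\Re(\mathcal{H}_\gamma|_I)$ to a Rademacher complexity of $\mathbb{F}$ itself, but for the statement as written this contraction step is not required; the covering-number analysis of $\mathbb{F}$ will be used only in the subsequent theorem that bounds $\widehat{\Re}(\mathcal{H}_\gamma|_I)$ explicitly in terms of the network architecture.
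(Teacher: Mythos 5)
Your proposal is correct and follows essentially the same route as the paper's proof: domination of the misclassification indicator by the ramp loss of the negated margin, McDiarmid's bounded-differences inequality applied to the uniform deviation over $\mathcal{H}_\gamma$ combined with ghost-sample symmetrization for the first bound, and a second application of McDiarmid to the empirical Rademacher complexity plus a union bound for the second. No substantive differences to report.
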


Lemma \ref{generalizationRademacher} suggests that the margin-based generalization error can be bounded by the global Rademacher complexity or its empirical counterpart. In line with our proof strategy, we need to derive an upper bound for the Rademacher complexity or its empirical counterpart. Fortunately, this result has already been established in the literature \citep{bartlett2017spectrally}. Below is a conclusion from \citet{bartlett2017spectrally} that provides the desired bound.

\begin{lemma}\label{Rademachercovering}
Suppose \(0\in \mathcal{H}_\gamma\), and all conditions in Lemma \ref{generalizationRademacher} holds. Then
\[
\Re(\mathcal{H}_\gamma|_{I}) \leq \inf_{\varphi>0} \left(\frac{4\varphi}{n} + \frac{12}{n}\int_{\varphi}^{\sqrt{n}}\sqrt{
\log \mathcal{N}(\mathcal{H}_\gamma|_I, \epsilon, \lVert \cdot\rVert_2)}\right).
\]
\end{lemma}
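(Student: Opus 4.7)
The plan is to prove this via the classical Dudley entropy integral / chaining argument. At a high level, I fix a realization of the sample $I$ and a sequence of successively finer $\epsilon$-covers of $\mathcal{H}_\gamma|_I$ (viewed as a subset of $\mathbb{R}^n$ under the scaled Euclidean metric induced by evaluation on the sample), then approximate each $h \in \mathcal{H}_\gamma$ by its nearest point in each cover, telescope the resulting differences, and bound the Rademacher average of each telescoped piece via Massart's finite class lemma. Finally I take expectation over the Rademacher variables and the sample, and convert the dyadic sum that appears into the claimed entropy integral.

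Concretely, I would fix $\varphi > 0$ and an integer $N$ large, and set dyadic scales $\epsilon_j = 2^{-j}\sqrt{n}$ for $j = 0, 1, \dots, N$, choosing $N$ so that $\epsilon_N \leq \varphi < \epsilon_{N-1}$. For each $j$ let $C_j$ be a minimal $\epsilon_j$-cover of $\mathcal{H}_\gamma|_I$ in $\lVert \cdot \rVert_2$ of cardinality $\mathcal{N}(\mathcal{H}_\gamma|_I, \epsilon_j, \lVert \cdot \rVert_2)$. For any $h \in \mathcal{H}_\gamma$ let $\pi_j(h) \in C_j$ be a nearest element in $C_j$; since $0 \in \mathcal{H}_\gamma$ (hypothesis of the lemma) I can take $\pi_0(h) = 0$ and hence write the telescoping decomposition
\[
h = \pi_N(h) + \sum_{j=1}^{N} \bigl(\pi_j(h) - \pi_{j-1}(h)\bigr),
\]
valid on the $n$ sample points. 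Substituting into the empirical Rademacher average and using the triangle inequality plus the bound $\lVert h - \pi_N(h)\rVert_2 \leq \epsilon_N \leq \varphi$ on the residual piece (which contributes at most $\varphi/\sqrt{n}$ after the $1/n$ normalization and Cauchy--Schwarz against the unit-norm Rademacher vector), I get
\[
\widehat{\Re}(\mathcal{H}_\gamma|_I) \leq \frac{\varphi}{\sqrt{n}} + \sum_{j=1}^{N} \mathbb{E}_u\left[\sup_{h \in \mathcal{H}_\gamma} \frac{1}{n}\sum_{i=1}^n u_i\bigl(\pi_j(h) - \pi_{j-1}(h)\bigr)_i\right].
\]

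For each $j$ the difference set $\{\pi_j(h) - \pi_{j-1}(h) : h \in \mathcal{H}_\gamma\}$ is finite of cardinality at most $|C_j|\cdot|C_{j-1}| \leq \mathcal{N}(\mathcal{H}_\gamma|_I, \epsilon_j, \lVert\cdot\rVert_2)^2$, and each element has $\lVert \cdot\rVert_2$ norm at most $\epsilon_j + \epsilon_{j-1} = 3\epsilon_j$. Massart's finite class lemma then yields a bound of $3\epsilon_j \sqrt{2 \log |C_j|^2}/n = (6\epsilon_j/n)\sqrt{\log \mathcal{N}(\mathcal{H}_\gamma|_I, \epsilon_j, \lVert\cdot\rVert_2)}$ (absorbing the factor of two inside the log into a slightly enlarged constant and using monotonicity of the covering number in $\epsilon$). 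Using $\epsilon_j = 2(\epsilon_j - \epsilon_{j+1})$ to rewrite the dyadic sum as a lower Riemann sum for the integral $\int_\varphi^{\sqrt n}\sqrt{\log \mathcal{N}(\mathcal{H}_\gamma|_I, \epsilon, \lVert\cdot\rVert_2)}\,d\epsilon$, and collecting constants (upgrading $\varphi/\sqrt{n}$ to the stated $4\varphi/n$ form after accounting for the jump at $\epsilon_N$ versus $\varphi$), I obtain the claimed inequality for $\widehat{\Re}$; taking expectation over $I$ gives the bound for $\Re$. Finally I take the infimum over $\varphi > 0$.

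The main obstacle is nothing deep mathematically — chaining is textbook — but rather tracking the exact constants to land on the factors $4$ and $12$ in the statement. In particular, one has to be careful (i) in choosing $N$ so that the terminal approximation error is absorbed into $4\varphi/n$ rather than $\varphi/\sqrt n$, (ii) in applying Massart's lemma with the $3\epsilon_j$ diameter bound and bounding $\log(|C_j||C_{j-1}|)$ by a multiple of $\log \mathcal{N}(\mathcal{H}_\gamma|_I, \epsilon_j, \lVert\cdot\rVert_2)$ using monotonicity, and (iii) in passing from the dyadic sum $\sum_j \epsilon_j \sqrt{\log \mathcal{N}(\cdot, \epsilon_j, \cdot)}$ to the integral form by the standard ``two consecutive dyadic scales bracket a length-$(\epsilon_{j-1}-\epsilon_j)$ interval'' trick. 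Since the result is attributed to \citet{bartlett2017spectrally}, the proof can also be concluded simply by citing Lemma A.5 there once the setup (finite $\lVert \cdot \rVert_2$ diameter, $0 \in \mathcal{H}_\gamma$) has been verified.
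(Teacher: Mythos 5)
Your approach is, in substance, the same as the paper's: the paper gives no proof of this lemma at all and simply imports it as Lemma A.5 of \citet{bartlett2017spectrally}, and the Dudley chaining argument you sketch (dyadic scales, telescoping against nested covers anchored at $0\in\mathcal{H}_\gamma$, Massart's finite-class lemma on each increment, Riemann-sum comparison to the entropy integral) is precisely the proof of that cited lemma. Your fallback of verifying the hypotheses and citing Bartlett et al.\ directly is literally what the paper does, so there is no gap of substance.

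One bookkeeping point in your write-up is wrong, and it is worth flagging because you single it out as the delicate part. The Rademacher vector $u$ has $\lVert u\rVert_2=\sqrt{n}$, not $1$, so Cauchy--Schwarz on the terminal residual gives $\tfrac{1}{n}\lVert u\rVert_2\,\lVert h-\pi_N(h)\rVert_2\le \epsilon_N/\sqrt{n}$, and the chaining argument therefore produces a first term of order $\varphi/\sqrt{n}$ --- this is why Bartlett et al.'s Lemma A.5 reads $4\varphi/\sqrt{n}$. You cannot ``upgrade'' $\varphi/\sqrt{n}$ to the paper's printed $4\varphi/n$: for $n\ge 16$ that term is \emph{smaller}, so the step you describe would be a strengthening rather than a relaxation, and no choice of $N$ or constant-collection delivers it. The $4\varphi/n$ in the paper's display (together with the missing $d\epsilon$ in the integral) is a transcription error inherited from the source; your proof establishes the correct $4\varphi/\sqrt{n}$ form, which is the version that should be carried into Theorem \ref{finaltheorem}.
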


In Lemma \ref{Rademachercovering}, \(\mathcal{N}(\mathcal{H}_\gamma|_I, \epsilon, \lVert \cdot\rVert_2)\) denotes the cover number of the function class \(\mathcal{H}_\gamma\) with respect to the distance metric \(\lVert \cdot\rVert_2\) and a radius of \(\epsilon\). By combining the first conclusion of Lemma \ref{generalizationRademacher} and Lemma \ref{Rademachercovering}, it suggests that the generalization error bound can be easily obtained if the covering number of the function class \(\mathcal{H}_\gamma\) can be determined. In the following, we derive the covering number of the function class \(\mathcal{H}_\gamma\) in a recursive form, starting from each layer of the CNNs, then extending to the entire CNNs, and finally encompassing \(\mathcal{H}_\gamma\). Specifically, the following theorem provides an upper bound on the covering number of the function class corresponding to the first layer of CNNs. 

\begin{theorem} \label{singlayercov}
The matrices \(W_{(1)} \in \mathbb{R}^{nk_1k_1 \times d_1}\) and \(F_{(0)} \in  \mathbb{R}^{(M_0 -k_1 + 1)(N_0 - k_1 + 1) \times nk_1 k_1}\) are rearrangements of \(\Theta_{(1)}\) and \(Z_{(0)}\) based on equations (1) and (2) in Supplementary Material, respectively. Then
\[
\log \mathcal{N}\left(\left\{\Theta_{(1)} \circledast Z_{(0)} , \lVert W_{(1)} \rVert_2 \leq b_1\right\}, \epsilon, \lVert \cdot \rVert_2\right) \leq \bigg\lceil \frac{\lVert F_{(0)} \rVert_2^2b_1^2d_1}{\epsilon^2}\bigg \rceil\ln(2nk_1k_1d_1).
\]
\end{theorem}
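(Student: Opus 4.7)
The plan is to recast the first-layer convolution as a matrix product and then apply a Maurey-style sparsification bound for the covering number of matrix classes. First, I would use the rearrangements provided by the supplementary-material equations to identify $\Theta_{(1)}\circledast Z_{(0)}$ with the product $F_{(0)}W_{(1)}$: the im2col unfolding places each receptive-field patch of $Z_{(0)}$ as a row of $F_{(0)}$, while the $d_1$ vectorized kernels become the columns of $W_{(1)}$. Under this identification, the class inside the covering number reduces to
$$\left\{F_{(0)}W : W\in\mathbb{R}^{nk_1k_1\times d_1},\ \|W\|_2\le b_1\right\},$$
with $\|W\|_2$ interpreted as the spectral norm and the output metric $\|\cdot\|_2$ interpreted as the Frobenius norm on the product matrix.

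Second, I would vectorize the target in order to bundle the $d_1$ output columns into a single linear-map covering problem: $\mathrm{vec}(F_{(0)}W)=(I_{d_1}\otimes F_{(0)})\,\mathrm{vec}(W)$, where the Kronecker factor has $nk_1k_1 d_1$ columns and the same spectral norm as $F_{(0)}$, while $\|\mathrm{vec}(W)\|_2=\|W\|_F\le\sqrt{d_1}\,\|W\|_2\le\sqrt{d_1}\,b_1$. Third, I would apply a Maurey-type sparsification argument: for any $v$ with $\|v\|_2\le\sqrt{d_1}\,b_1$, there exists an $N$-sparse average $\hat v$ of signed, rescaled columns of $I_{d_1}\otimes F_{(0)}$ whose expected squared Euclidean approximation error is at most $\|F_{(0)}\|_2^2\,d_1\,b_1^2/N$. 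Setting $N=\lceil\|F_{(0)}\|_2^2 b_1^2 d_1/\epsilon^2\rceil$ and invoking the probabilistic method yields a deterministic $\epsilon$-cover of cardinality at most $(2nk_1k_1 d_1)^N$; taking logarithms then gives exactly the bound stated in the theorem.

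The main obstacle will be obtaining the $\ell_2$-form of Maurey's inequality with a spectral-norm rather than Frobenius-norm dependence on $F_{(0)}$, since a naive uniform-column sampling scheme yields the weaker error $\|F_{(0)}\|_F^2 b_1^2 d_1/N$. The sharper estimate requires an importance-sampling distribution whose weights depend jointly on the coordinates of $v$ and on the column norms of $I_{d_1}\otimes F_{(0)}$, combined with a variance computation that collapses the right-hand side to $\|F_{(0)}\|_2^2\|v\|_2^2/N$ by exploiting the operator-norm contraction of the Kronecker factor. The remaining steps---verifying the im2col identification from the supplementary-material equations and counting the $(2nk_1k_1 d_1)^N$ sparse approximants---are essentially bookkeeping once this technical core is in place.
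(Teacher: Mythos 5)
Your overall architecture --- the im2col identification of the convolution with $F_{(0)}W_{(1)}$, a Maurey sparsification over $2nk_1k_1d_1$ signed rank-one atoms, and the cardinality count $(2nk_1k_1d_1)^N$ --- is exactly the paper's route. But the step you single out as the ``technical core'' is both unnecessary and unobtainable. It is unnecessary because the quantity $\lVert F_{(0)}\rVert_2$ in the bound is not the spectral norm: the paper's proof defines it as $\lVert(\lVert F_{(0)}(:,1)\rVert_2,\dots,\lVert F_{(0)}(:,nk_1k_1)\rVert_2)\rVert_2$, i.e.\ the Frobenius norm of $F_{(0)}$. The ``naive'' estimate you dismiss as weaker, with squared error $\lVert F_{(0)}\rVert_F^2\, b_1^2 d_1/N$, is therefore precisely what the theorem asserts; the paper obtains it by rescaling the columns of $F_{(0)}$ to unit norm, writing $F_{(0)}W_{(1)}=\widetilde F_{(0)}(B\odot W_{(1)})$, bounding $\lVert B\odot W_{(1)}\rVert_1\le d_1^{1/2}b_1\lVert F_{(0)}\rVert_2$ by Cauchy--Schwarz, and then invoking the standard Maurey lemma for convex hulls of unit-norm elements.

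It is unobtainable because no importance-sampling scheme over the columns of $I_{d_1}\otimes F_{(0)}$ can deliver an error of $\lVert F_{(0)}\rVert_\sigma^2\lVert v\rVert_2^2/N$. Take $F_{(0)}=I_d$ and $v=\mathbf{1}/\sqrt d$: then the spectral norm of $F_{(0)}$ and $\lVert v\rVert_2$ both equal $1$, so the claimed bound would give squared error $1/N$, yet every $N$-sparse combination of columns of $I_d$ misses $v$ by squared error at least $(d-N)/d$, which stays near $1$ for $N\ll d$. The optimal column-sampling distribution $p_i\propto|v_i|\,\lVert F_i\rVert_2$ yields $(\sum_i|v_i|\lVert F_i\rVert_2)^2/N\le\lVert v\rVert_2^2\lVert F\rVert_F^2/N$ and no better; the operator norm cannot enter through this argument. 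If you drop the spectral-norm ambition and read $\lVert F_{(0)}\rVert_2$ and $\lVert W_{(1)}\rVert_2$ as the paper's proof does (column-wise $\ell_2$, i.e.\ Frobenius, quantities), your plan closes and coincides with the paper's; as written, it stalls on a lemma that is false. A secondary bookkeeping point: if you insist on $b_1$ bounding the spectral norm of $W_{(1)}$, the passage $\lVert W_{(1)}\rVert_F\le\sqrt{d_1}\,b_1$ combined with $\lVert I_{d_1}\otimes F_{(0)}\rVert_F^2=d_1\lVert F_{(0)}\rVert_F^2$ costs an extra factor of $d_1$ relative to the stated bound.
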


Based on Theorem \ref{singlayercov}, we can obtain an upper bound on the covering number of the function class corresponding to the entire CNNs {by employing a recursive approach}.
\begin{theorem}\label{coveringbound}
Assume that the non-linear activations of the \(i\)th layer is \(\eta_{i}\)-Lipschitz continuous and the number of reuses of each element in the convolution operation of each layer is \(m_i, i = 1,2,\cdots, m_L\) and \(m_{L+1} = 1\). Then the covering number of CNNs class \(\mathbb{F}\) satisfies
\[
\ln \mathcal{N}(\mathbb{F}, \epsilon, \lVert \cdot \rVert_2) \leq \ln\left(2C\widetilde{M}_{L} \widetilde{N}_{L} d_{L} \vee \max_{i \leq L} 2d_ik^2_i\right)  \frac{\lVert Z^k_{(0)} \rVert^2_2 \prod_{j\leq L+1} m_j \eta^2_{j} b^2_j}{\epsilon^2}(L+1)^3.
\]
\end{theorem}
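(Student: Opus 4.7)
The plan is to prove Theorem \ref{coveringbound} by induction on layers, starting from the single-layer result of Theorem \ref{singlayercov} and propagating the cover through all $L$ convolutional blocks plus the fully connected head. The strategy is a classical peeling argument: for each layer, compose a cover of the input class (built inductively) with a conditional cover of the current-layer parameters, using Lipschitz continuity to control how perturbations propagate.

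First I would establish a one-step perturbation inequality. Writing the $l$-th convolution as a matrix product $F_{(l-1)} W_{(l)}$ with the input unfolding satisfying $\lVert F_{(l-1)}\rVert_2 \leq \sqrt{m_l}\,\lVert Z_{(l-1)}\rVert_2$ (since each entry of $Z_{(l-1)}$ is reused at most $m_l$ times), I would expand
\[
\lVert Z_{(l)}-Z'_{(l)}\rVert_2 \leq \eta_l \lVert F_{(l-1)}(W_{(l)}-W'_{(l)})\rVert_2 + \eta_l b_l \sqrt{m_l}\,\lVert Z_{(l-1)}-Z'_{(l-1)}\rVert_2
\]
by adding and subtracting an intermediate term and invoking the $\eta_l$-Lipschitz property of $\sigma_l$ together with the $1$-Lipschitz property of max/average pooling on disjoint windows. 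Iterating yields both the forward norm bound $\lVert Z_{(l)}\rVert_2 \leq \lVert Z_{(0)}\rVert_2 \prod_{j\leq l}\eta_j b_j\sqrt{m_j}$ and the composite error
\[
\lVert Z_{(L+1)}-Z'_{(L+1)}\rVert_2 \leq \sum_{l=1}^{L+1} \eta_l \epsilon_l^X \prod_{j>l}\eta_j b_j\sqrt{m_j},
\]
where $\epsilon_l^X$ denotes the pre-activation scale introduced by the conditional parameter cover at layer $l$.

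Next I would apply Theorem \ref{singlayercov} at each layer to cover the pre-activation $F_{(l-1)} W_{(l)}$ in $\lVert\cdot\rVert_2$ at scale $\epsilon_l^X$, obtaining $\ln \mathcal{N}_l \leq \lceil \lVert F_{(l-1)}\rVert_2^2 b_l^2 d_l / (\epsilon_l^X)^2 \rceil \ln(2d_lk_l^2)$, treating the fully connected head as the $(L+1)$-st layer with $m_{L+1}=1$, effective kernel dimension $\widetilde{M}_L\widetilde{N}_L d_L$, and output dimension $C$. A union bound over layer-wise covers gives $\ln \mathcal{N}(\mathbb{F},\epsilon,\lVert\cdot\rVert_2) \leq \sum_{l=1}^{L+1}\ln \mathcal{N}_l$. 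I would then balance the scales by setting $\eta_l \epsilon_l^X \prod_{j>l}\eta_j b_j\sqrt{m_j} = \epsilon/(L+1)$, which yields $1/(\epsilon_l^X)^2 = (L+1)^2 \eta_l^2 \prod_{j>l}\eta_j^2 b_j^2 m_j / \epsilon^2$; combined with $\lVert F_{(l-1)}\rVert_2^2 \leq m_l\lVert Z_{(0)}\rVert_2^2\prod_{j<l}\eta_j^2 b_j^2 m_j$, each summand becomes proportional to $\lVert Z_{(0)}\rVert_2^2 \prod_j \eta_j^2 b_j^2 m_j \cdot (L+1)^2/\epsilon^2$. Summing over the $L+1$ layers contributes the final $(L+1)^3$ factor, and replacing every per-layer $\ln(2d_lk_l^2)$ together with the fully connected $\ln(2C\widetilde{M}_L\widetilde{N}_L d_L)$ by their common upper bound $\ln(2C\widetilde{M}_L\widetilde{N}_L d_L \vee \max_{i\leq L}2d_ik_i^2)$ recovers the stated inequality.

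The main obstacle is the bookkeeping: verifying that $\lVert F_{(l-1)}\rVert_2 \leq \sqrt{m_l}\lVert Z_{(l-1)}\rVert_2$ is sharp under the specific convolution-to-matrix unfolding used in Theorem \ref{singlayercov}, ensuring pooling is indeed $1$-Lipschitz in the chosen norm, and arranging the error allocation $\{\epsilon_l^X\}$ so that the depth dependence remains polynomial rather than exponential. A subtle secondary point is the passage from conditional covers (each built for a fixed representative from the previous layer's cover) to an unconditional cover of $\mathbb{F}_l$; this is handled by the standard composition argument that the product of cover cardinalities bounds the composed cover size, with errors adding precisely in the form captured by the one-step inequality above.
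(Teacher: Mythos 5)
Your proposal follows essentially the same route as the paper: an inductive peeling argument that composes conditional per-layer covers (each controlled by the single-layer bound of Theorem \ref{singlayercov}), propagates errors through the layers via the Lipschitz constants $\eta_j b_j \sqrt{m_j}$, and allocates the resolution budget equally as $\epsilon/(L+1)$ per layer to obtain the $(L+1)^3$ factor. The only cosmetic difference is that you attach the $\sqrt{m_l}$ reuse factor to the convolution unfolding $\lVert F_{(l-1)}\rVert_2 \leq \sqrt{m_l}\lVert Z_{(l-1)}\rVert_2$ while the paper places it on the pooling-plus-activation composite (its Lemma A1); the resulting product $\prod_j m_j\eta_j^2 b_j^2$ is identical.
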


From Theorems \ref{singlayercov} and \ref{coveringbound}, it is evident that there is a significant relationship between the upper bound of the covering number and the structure of CNNs, as well as the non-linear activation functions used in each layer. By utilizing this upper bound of the covering number and combining it with Lemmas \ref{generalizationRademacher} and \ref{Rademachercovering}, we can derive the following margin-based generalization error bound that is directly related to the structure of CNNs.

\begin{theorem}\label{finaltheorem}
Assume that the conditions in Lemma \ref{Rademachercovering} and Theorem \ref{coveringbound} both hold. Then, for any \(\delta \in (0, 1)\), with probability of at least \(1-\delta\), every \(f \in \mathbb{F}\) satisfies
\[
\mathbb{P} \left(\argmax_{i}f(R)_i \neq Y\right)-  \widehat{\mathcal{R}}_\gamma(f) \leq  \frac{24\sqrt{Q}}{n}\left(1 + \ln\left(n/(3\sqrt{Q})\right)\right)+ \sqrt{\frac{\log (1/\delta)}{2n}},
\]
where \(Q = \ln\left(2C\widetilde{M}_{L} \widetilde{N}_{L} d_{L} \vee \max_{i \leq L} 2d_ik^2_i\right)  \frac{4\lVert \mZ^k_{(0)} \rVert^2_2 \prod_{j\leq L+1} m_j \eta^2_{j} b^2_j}{\gamma^2}(L+1)^3\).
\end{theorem}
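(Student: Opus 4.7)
The plan is to chain together the three ingredients already proved: the Rademacher bound from Lemma \ref{generalizationRademacher}, the Dudley-type entropy bound from Lemma \ref{Rademachercovering}, and the covering number estimate from Theorem \ref{coveringbound}. The genuinely new work is to translate a cover of the CNN class \(\mathbb{F}\) into a cover of the loss class \(\mathcal{H}_\gamma\) via a Lipschitz-composition argument, and then to evaluate the Dudley integral in closed form.

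First, I would apply the first conclusion of Lemma \ref{generalizationRademacher}, which, with probability at least \(1-\delta\), gives
\[
\mathbb{P}\bigl(\argmax_i f(R)_i \neq Y\bigr) - \widehat{\mathcal{R}}_\gamma(f) \leq 2\Re(\mathcal{H}_\gamma|_I) + \sqrt{\log(1/\delta)/(2n)}.
\]
It therefore suffices to control \(\Re(\mathcal{H}_\gamma|_I)\) by \(\frac{12\sqrt{Q}}{n}\bigl(1+\ln(n/(3\sqrt{Q}))\bigr)\). For this, I would perform a Lipschitz reduction: the ramp loss \(\ell_\gamma\) is \(1/\gamma\)-Lipschitz, and the margin operator \(\mathcal{M}(V,Y)=V_Y - \max_{j\neq Y}V_j\) is \(2\)-Lipschitz in \(V\) in the Euclidean norm, so \(\ell_\gamma \circ (-\mathcal{M})\) is \((2/\gamma)\)-Lipschitz in the network output. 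Any \((\gamma\epsilon/2)\)-cover of \(\mathbb{F}\) in \(\|\cdot\|_2\) therefore induces an \(\epsilon\)-cover of \(\mathcal{H}_\gamma|_I\), so
\[
\log \mathcal{N}(\mathcal{H}_\gamma|_I,\epsilon,\|\cdot\|_2) \leq \log \mathcal{N}(\mathbb{F},\gamma\epsilon/2,\|\cdot\|_2) \leq Q/\epsilon^2,
\]
the last inequality following from Theorem \ref{coveringbound} after the Lipschitz constant is squared, which is precisely where the factor \(4\) in the definition of \(Q\) originates.

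Next, I would substitute this covering number bound into Lemma \ref{Rademachercovering}. Using \(\sqrt{\log \mathcal{N}(\mathcal{H}_\gamma|_I,\epsilon,\|\cdot\|_2)} \leq \sqrt{Q}/\epsilon\), the entropy integral evaluates as \(\int_\varphi^{\sqrt{n}} \sqrt{Q}/\epsilon\, d\epsilon = \sqrt{Q}\ln(\sqrt{n}/\varphi)\), and one is left with minimizing
\[
\frac{4\varphi}{n} + \frac{12\sqrt{Q}}{n}\ln\frac{\sqrt{n}}{\varphi}
\]
over \(\varphi>0\). Differentiating with respect to \(\varphi\) and setting the derivative to zero gives the minimizer \(\varphi^\star = 3\sqrt{Q}\), which upon substitution yields \(\Re(\mathcal{H}_\gamma|_I) \leq \frac{12\sqrt{Q}}{n}\bigl(1+\ln(\sqrt{n}/(3\sqrt{Q}))\bigr)\). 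Multiplying by \(2\) and combining with the deviation term from the first step delivers the required bound up to rescaling inside the logarithm.

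The principal obstacle is the Lipschitz-composition step: one must be careful about which norm the margin operator is Lipschitz in so that the squared constant \((2/\gamma)^2\) is matched exactly by the factor \(4/\gamma^2\) appearing in \(Q\), rather than by an unspecified absolute-constant substitute that would weaken the stated bound. A secondary subtlety concerns regime validity: the closed-form minimizer \(\varphi^\star=3\sqrt{Q}\) is only admissible when \(3\sqrt{Q}\leq \sqrt{n}\), so when \(Q>n/9\) the right-hand side of the theorem already exceeds \(1\) and the bound is vacuous; this boundary case should be flagged once the constants are pinned down.
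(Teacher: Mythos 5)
Your proposal follows the paper's own argument essentially step for step: apply the first conclusion of Lemma \ref{generalizationRademacher}, pass from \(\mathcal{N}(\mathbb{F},\cdot,\lVert\cdot\rVert_2)\) to \(\mathcal{N}(\mathcal{H}_\gamma|_I,\cdot,\lVert\cdot\rVert_2)\) via the \(2/\gamma\)-Lipschitz composition (which is exactly where the paper's factor \(4/\gamma^2\) in \(Q\) comes from), insert Theorem \ref{coveringbound} into the Dudley integral of Lemma \ref{Rademachercovering}, and optimize over \(\varphi\) in closed form. The only (cosmetic) divergence is that the paper implicitly evaluates the Dudley bound in its \(4\varphi/\sqrt{n}\) normalization, yielding the minimizer \(3\sqrt{Q/n}\) and the logarithm \(\ln(n/(3\sqrt{Q}))\) as stated, whereas you take Lemma \ref{Rademachercovering} at face value with \(4\varphi/n\), obtaining \(\varphi^\star=3\sqrt{Q}\) and \(\ln(\sqrt{n}/(3\sqrt{Q}))\) --- a marginally sharper expression that still implies the stated bound, so the proof is correct as written.
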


In deriving the aforementioned generalization error bound, we have incorporated assumptions that are widely used in existing literature \citep{bartlett2017spectrally}. This implies that the conclusion has a certain degree of generality, and under more rigorous assumptions, it is possible to obtain sharper bounds. The generalization error bound in Theorem \ref{finaltheorem} is applicable to any convex combination coefficients and network parameters, and therefore, it also applies to the \(\alpha^*\) and \(\Theta^*\) obtained based on the ADMM algorithm. Furthermore, it should be noted that the derived generalization error bound is specific to a particular network structure.

\section{Simulation Studies} \label{sec5simulate}
The purpose of this section is to assess the performance of the proposed method through simulation studies, while simultaneously investigating the impact of classifying using point-based and interval-based imaging methods.

\subsection{Simulation design} \label{sec51}
In the simulation studies, we investigated three distinct data generation processes (DGPs). For each of these processes, we first generate the center and range, and then reconstruct the upper and lower bounds. We assume that the central and range residuals follow a bivariate normal distribution, denoted as \(\epsilon_t = (\epsilon^c_t, \epsilon^r_t)^\top \sim \mathcal{N}(0, \Sigma)\), where \(\Sigma = \left(\begin{array}{cc}
	1 & \rho/2\\
	\rho/ 2 & 1/4
\end{array}\right)\), and \(\rho\) signifies the correlation coefficient. Moreover, we assign values to \(\rho\) as -0.9, -0.5, 0, 0.3, and 0.7, respectively. The subsequent outline the specifics of the three DGPs under consideration. DGP1:
\[
x_t = (x^c_t, x^r_t)^\top = \sum_{l=1}^{\infty} \pi_l \Phi z_{t,l} + \epsilon_{t},
\]
where \(z_{t,1}= (1, 1)^\top\), \(z_{t, l} \sim \mathcal{N}(0, \Sigma), l\geq 2\), and \(\Phi = \left(\begin{array}{cc}
	0.2 & -0.1\\
	0.1 & 0.2
\end{array}\right)\), \(\pi_l = l^{-2} / \sqrt{3}\). DGP2:
\[ 
x_t = (x^c_t, x^r_t)^\top =\Phi  x_{t-1} + \epsilon_{t} - \Gamma \epsilon_{t-1}, 
\]
where \(\Phi\) is the same as in DGP1, and \(\Gamma=\left(\begin{array}{cc}
	-0.6 & 0.3\\
	0.3 & 0.6
\end{array}\right)\). DGP3: 
\[
x_t = (x^c_t, x^r_t)^\top =\epsilon_t - \Gamma \epsilon_{t-1},
\]
where \(\Gamma\) is the same as in DGP2.

\subsection{Univariate interval-valued time series classification}\label{ss1}
For each DGP, the interval-valued time series data generated by different correlation coefficients \(\rho\) is divided into distinct classes, {in which} each class containing 500 samples. The length of interval-valued time series generated by all DGPs is \(T=150\). As a result, we obtain three interval-valued time series datasets {and each consists} of five classes with a total of 2500 samples. For each dataset, we randomly selected 80\% of the data as the training set and the remaining as the validation set. We set the threshold used in the RP imaging method to \(\pi/18\). From Figure \ref{app}, it is evident that different \(\nu\) result in varying levels of approximation from \((1 + \tanh(\nu x)) / 2\) to the Heaviside function \(H(x)\). Therefore, we select five distinct \(\nu\): 1, 5, 10, 15, and 20, to assess the impact of approximation levels on classification performance. Throughout all the experiments, we utilize classification accuracy as the evaluation metric.

As an illustrative example, Figure \ref{DGP1iaageRP} depict the images generated using RP based on DGP1. It is evident that interval-valued time series from distinct classes (distinguished by varying correlation coefficients \(\rho\)) yield highly akin images under the same level of smoothness. Nevertheless, relying solely on a single CNNs, such as ResNet \cite{He2016}, would not yield satisfactory classification outcomes. To address this limitation, we employ the fine-grained image classification network WS-DAN \cite{DBLP:journals/corr/abs-1901-09891} to classify the imaged dataset. In our comparative analysis with other methods, considering that existing techniques represent intervals using representative points for classification, we employ diverse representations, including center (\(c\)), range (\(r\)), upper bound (\(u\)), and lower bound (\(l\)), to achieve effective classification for interval-valued data.

The classification performance of the model based on the imaging method RP under different approximation levels across the three DGPs is summarized in Table \ref{unRPIRP}.

\begin{table}[H]
\setlength{\abovecaptionskip}{0pt}%
\setlength{\belowcaptionskip}{3pt}%
\centering
\caption{Classification accuracy based on WS-DAN using RP} \label{unRPIRP}
\setlength{\tabcolsep}{5mm}{
\begin{tabular}{cccccc}
	\hline
	\multirow{2}{*}{\tabincell{c}{DGP}} & \multicolumn{5}{c}{\(\nu\)(RP) } \\
	\cline{2-6}
	&  	\(\nu = 1\) & 	\(\nu = 5\) & 	\(\nu = 10\) & 	\(\nu = 15\) &	\(\nu = 20\)\\
	\hline
	DGP1 & 91.67&81.50&88.98&\textbf{95.83}&91.67\\
	DGP2 & \textbf{85.42}&66.67&75.00&72.92&68.32\\
	DGP3 &80.56&80.83&83.33&\textbf{85.42}&76.28\\
	\hline 
\end{tabular}
}
\end{table}

From Table \ref{unRPIRP}, it is evident that there is no clear trend of improved classification accuracy with increasing approximation levels. For instance, when \(\nu=15\), DGP1 and DGP3 achieved the highest accuracy, while \(\nu=1\), DGP2 attained the best accuracy. This implies that the images directly obtained through the RP method do not necessarily exhibit the highest discriminatory capability.

In the classification process based on the imaging method RP, the convex combination coefficients \(\alpha\) are considered as learnable parameters. The following figures depict the optimal convex combination coefficients obtained under different degrees of smoothing for the three DGPs. These graphs illustrate that relying on simple representative points within intervals for classification might not yield optimal discriminatory results, as the choice of most representative points can vary across different dimensions.

\begin{figure}[H]
\centering
\begin{minipage}[t]{0.19\textwidth}
\centering
\includegraphics[scale=0.4]{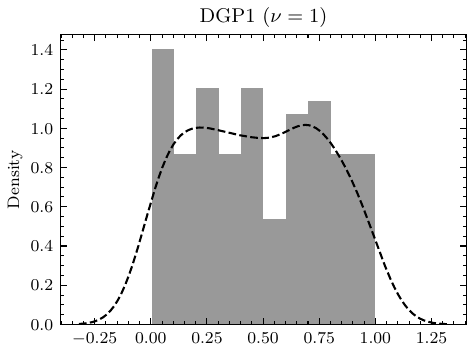}
\end{minipage}
\begin{minipage}[t]{0.19\textwidth}
\centering
\includegraphics[scale=0.4]{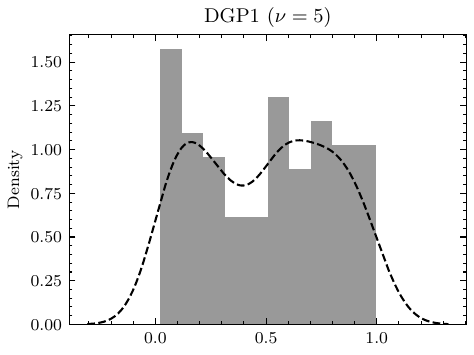}
\end{minipage}
\begin{minipage}[t]{0.19\textwidth}
\centering
\includegraphics[scale=0.4]{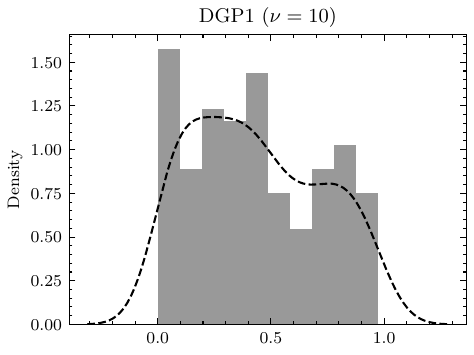}
\end{minipage}
\begin{minipage}[t]{0.19\textwidth}
\centering
\includegraphics[scale=0.4]{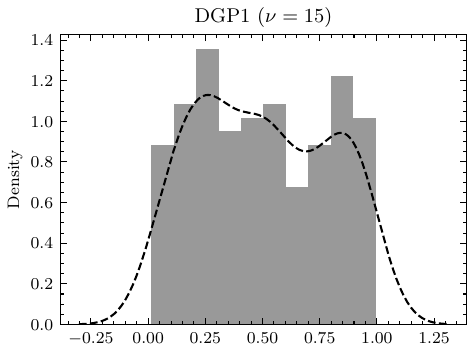}
\end{minipage}
\begin{minipage}[t]{0.19\textwidth}
\centering
\includegraphics[scale=0.4]{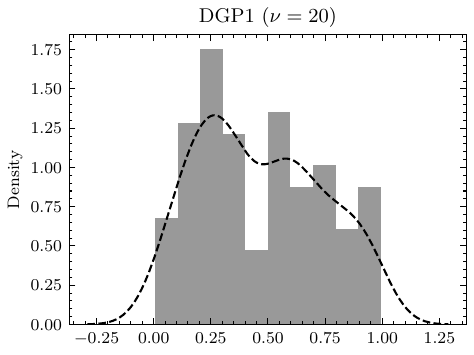}
\end{minipage}

\begin{minipage}[t]{0.19\textwidth}
	\centering
	\includegraphics[scale=0.4]{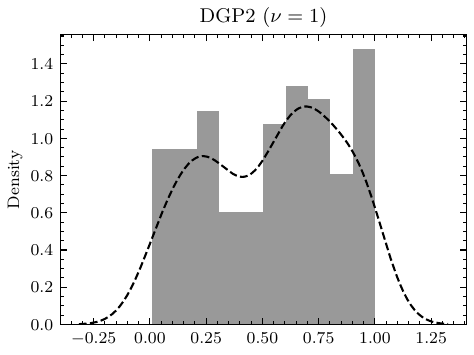}
\end{minipage}
\begin{minipage}[t]{0.19\textwidth}
	\centering
	\includegraphics[scale=0.4]{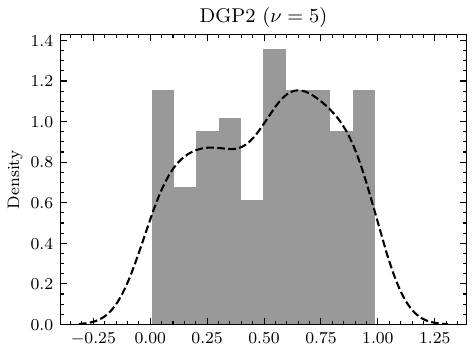}
\end{minipage}
\begin{minipage}[t]{0.19\textwidth}
	\centering
	\includegraphics[scale=0.4]{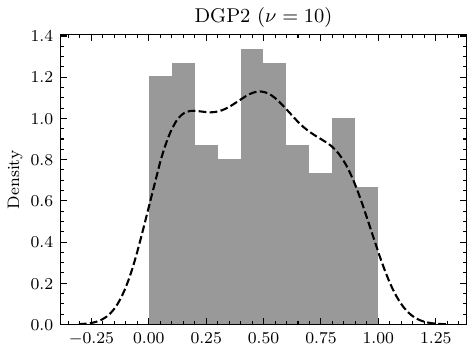}
\end{minipage}
\begin{minipage}[t]{0.19\textwidth}
	\centering
	\includegraphics[scale=0.4]{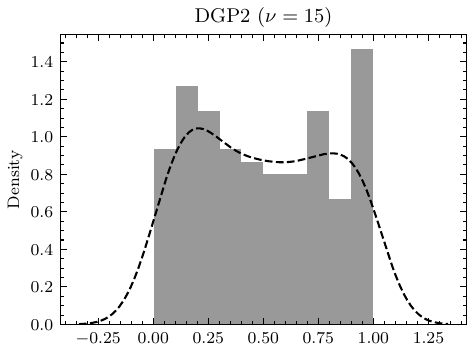}
\end{minipage}
\begin{minipage}[t]{0.19\textwidth}
	\centering
	\includegraphics[scale=0.4]{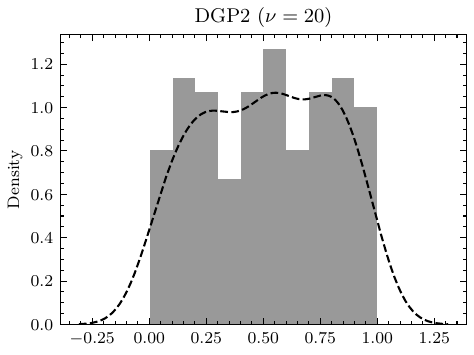}
\end{minipage}

\begin{minipage}[t]{0.19\textwidth}
	\centering
	\includegraphics[scale=0.4]{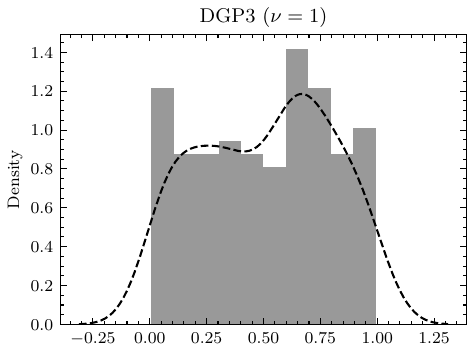}
\end{minipage}
\begin{minipage}[t]{0.19\textwidth}
	\centering
	\includegraphics[scale=0.4]{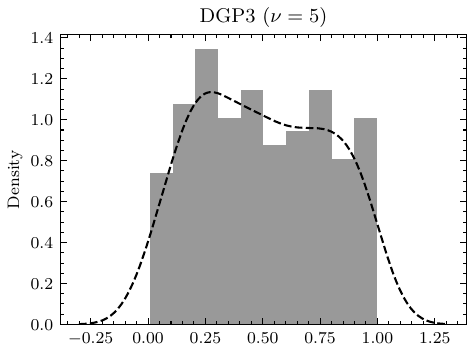}
\end{minipage}
\begin{minipage}[t]{0.19\textwidth}
	\centering
	\includegraphics[scale=0.4]{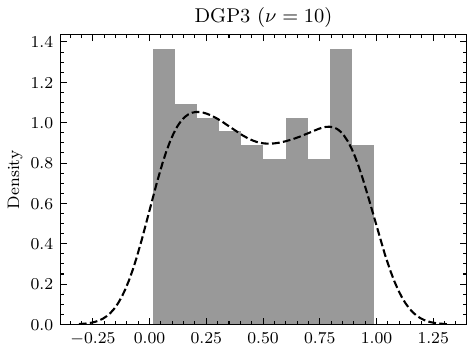}
\end{minipage}
\begin{minipage}[t]{0.19\textwidth}
	\centering
	\includegraphics[scale=0.4]{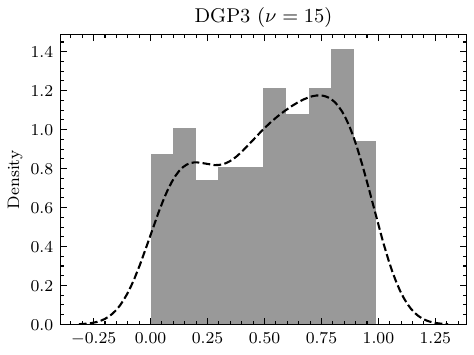}
\end{minipage}
\begin{minipage}[t]{0.19\textwidth}
	\centering
	\includegraphics[scale=0.4]{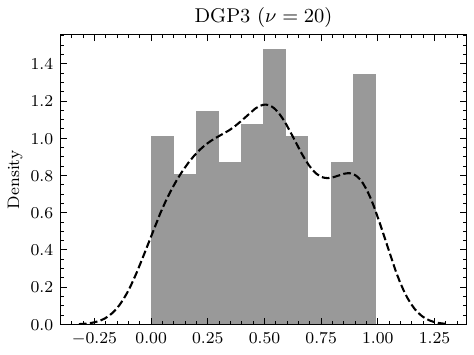}
\end{minipage}

\caption{Distributions of optimal convex combination coefficients obtained under different degrees of smoothing for the three DGPs. Each row represents a specific DGP, and each column represents a level of smoothing.}

\end{figure}

\begin{figure}[H]
	\centering
	\begin{minipage}[t]{0.19\textwidth}
		\centering
		\includegraphics[scale=0.52]{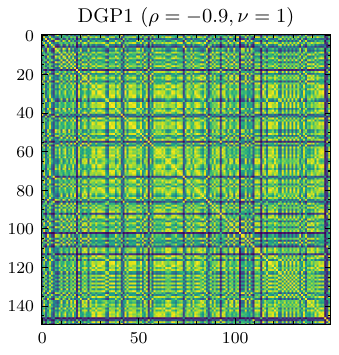}
	\end{minipage}
	\begin{minipage}[t]{0.19\textwidth}
		\centering
		\includegraphics[scale=0.52]{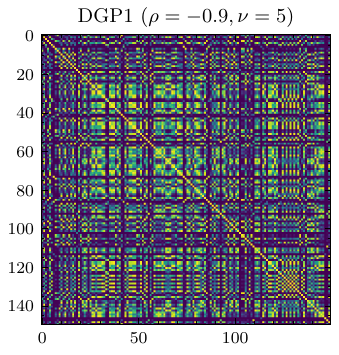}
	\end{minipage}
	\begin{minipage}[t]{0.19\textwidth}
		\centering
		\includegraphics[scale=0.52]{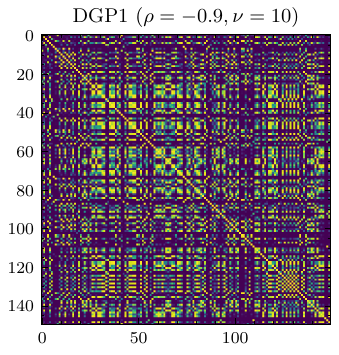}
	\end{minipage}
	\begin{minipage}[t]{0.19\textwidth}
		\centering
		\includegraphics[scale=0.52]{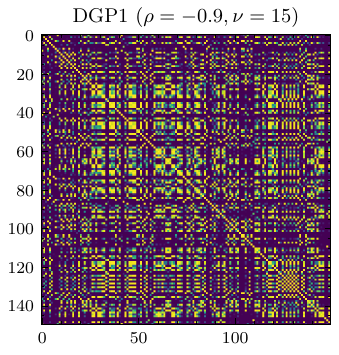}
	\end{minipage}
	\begin{minipage}[t]{0.19\textwidth}
		\centering
		\includegraphics[scale=0.52]{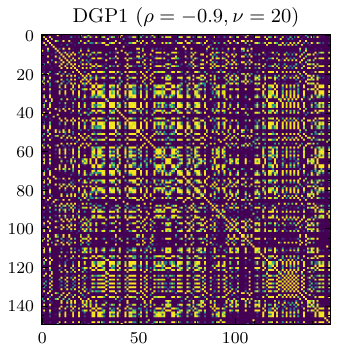}
	\end{minipage}
	
	\begin{minipage}[t]{0.19\textwidth}
		\centering
		\includegraphics[scale=0.52]{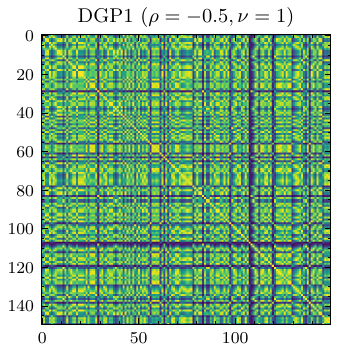}
	\end{minipage}
	\begin{minipage}[t]{0.19\textwidth}
		\centering
		\includegraphics[scale=0.52]{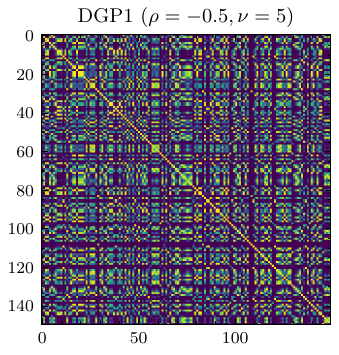}
	\end{minipage}
	\begin{minipage}[t]{0.19\textwidth}
		\centering
		\includegraphics[scale=0.52]{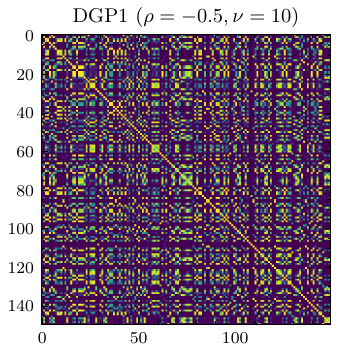}
	\end{minipage}
	\begin{minipage}[t]{0.19\textwidth}
		\centering
		\includegraphics[scale=0.52]{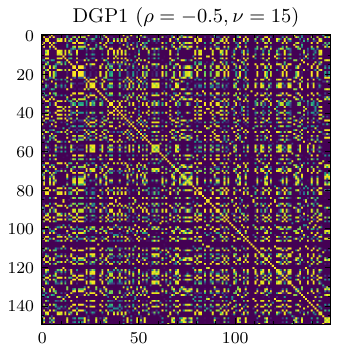}
	\end{minipage}
	\begin{minipage}[t]{0.19\textwidth}
		\centering
		\includegraphics[scale=0.52]{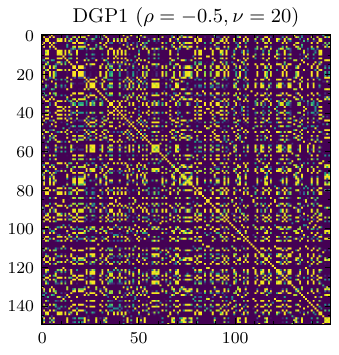}
	\end{minipage}
	
	\begin{minipage}[t]{0.19\textwidth}
		\centering
		\includegraphics[scale=0.52]{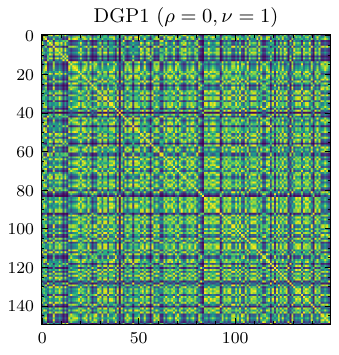}
	\end{minipage}
	\begin{minipage}[t]{0.19\textwidth}
		\centering
		\includegraphics[scale=0.52]{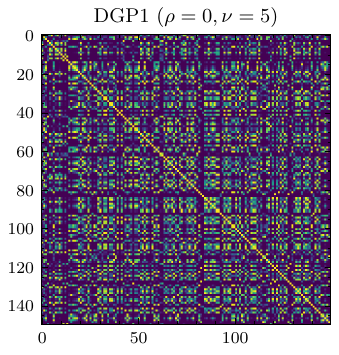}
	\end{minipage}
	\begin{minipage}[t]{0.19\textwidth}
		\centering
		\includegraphics[scale=0.52]{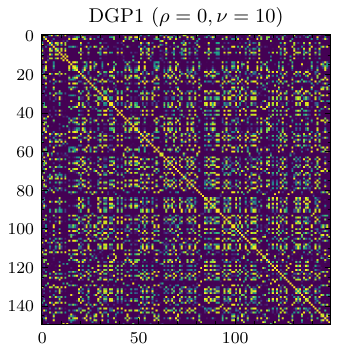}
	\end{minipage}
	\begin{minipage}[t]{0.19\textwidth}
		\centering
		\includegraphics[scale=0.52]{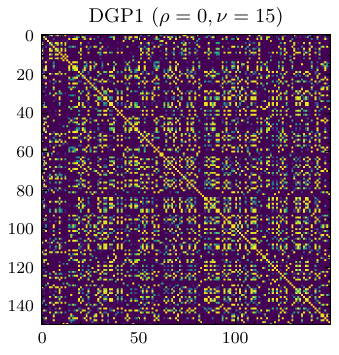}
	\end{minipage}
	\begin{minipage}[t]{0.19\textwidth}
		\centering
		\includegraphics[scale=0.52]{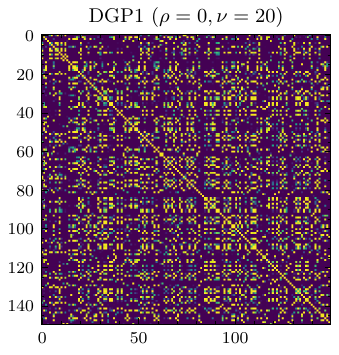}
	\end{minipage}
	
	\begin{minipage}[t]{0.19\textwidth}
		\centering
		\includegraphics[scale=0.52]{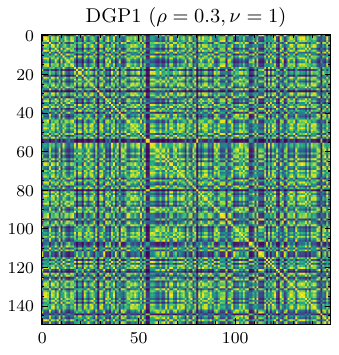}
	\end{minipage}
	\begin{minipage}[t]{0.19\textwidth}
		\centering
		\includegraphics[scale=0.52]{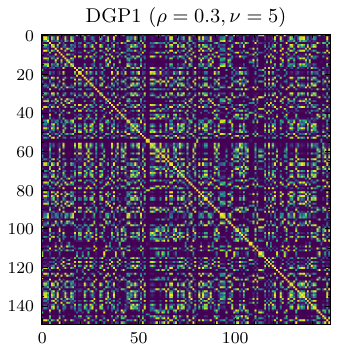}
	\end{minipage}
	\begin{minipage}[t]{0.19\textwidth}
		\centering
		\includegraphics[scale=0.52]{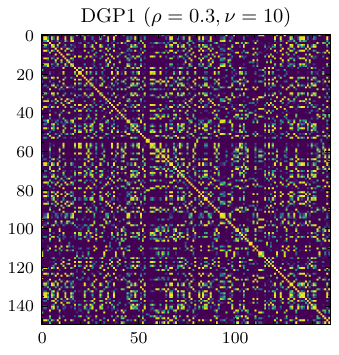}
	\end{minipage}
	\begin{minipage}[t]{0.19\textwidth}
		\centering
		\includegraphics[scale=0.52]{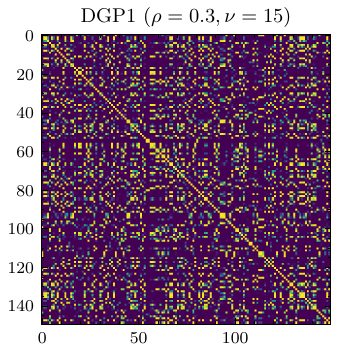}
	\end{minipage}
	\begin{minipage}[t]{0.19\textwidth}
		\centering
		\includegraphics[scale=0.52]{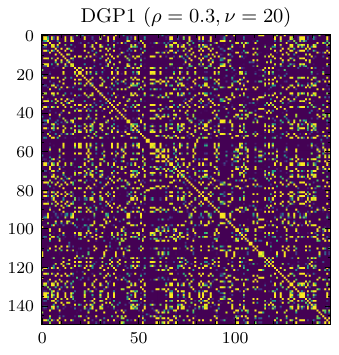}
	\end{minipage}

\begin{minipage}[t]{0.19\textwidth}
	\centering
	\includegraphics[scale=0.52]{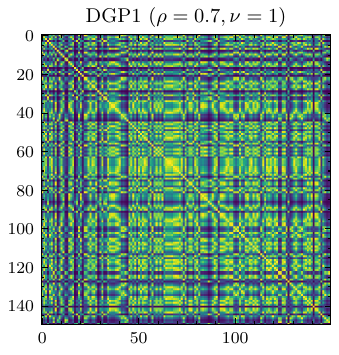}
\end{minipage}
\begin{minipage}[t]{0.19\textwidth}
	\centering
	\includegraphics[scale=0.52]{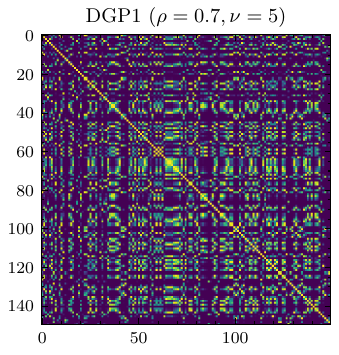}
\end{minipage}
\begin{minipage}[t]{0.19\textwidth}
	\centering
	\includegraphics[scale=0.52]{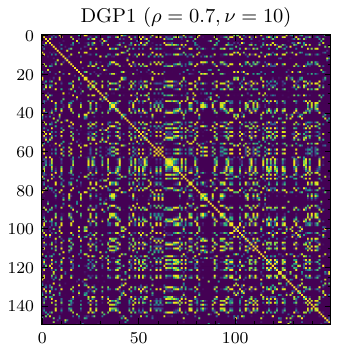}
\end{minipage}
\begin{minipage}[t]{0.19\textwidth}
	\centering
	\includegraphics[scale=0.52]{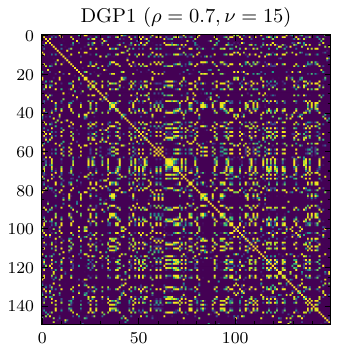}
\end{minipage}
\begin{minipage}[t]{0.19\textwidth}
	\centering
	\includegraphics[scale=0.52]{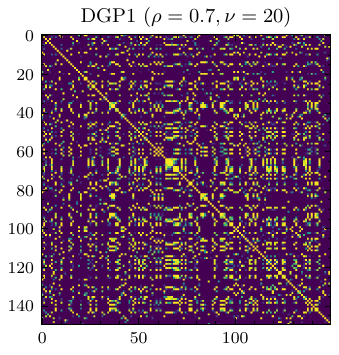}
\end{minipage}	
\caption{Images obtained based on RP corresponding to DGP1 at various correlation coefficients and levels of approximation. Each row represents the images obtained at different levels of approximation for the same correlation coefficient (class), while each column indicates the images for different classes at the same level of approximation.}\label{DGP1iaageRP}

\end{figure}

We compare {the} proposed method with all five types of methods discussed in Section \ref{sec1}. {KNeighbors (KN) and ShapeDTW (SD) \cite{2017shapeDTW} were selected among the distance-based
methods, and IndividualBOSS (IB), and IndividualTDE (ITDE) were selected among
the dictionary-based methods. In addition, the MrSQM \cite{Nguyen2021MrSQMFT} method was selected among
the shapelet-based methods by considering the computation time, and Bagging,
ComposableTimeSeriesForest (CTF) \cite{DENG2013142} and WeightedEnsemble (WE) were selected
among the ensemble learning methods.}

The ongoing advancements in deep learning have given rise to a multitude of models and techniques for time series classification. In this study, we meticulously handpicked 16 representative methods from the state-of-the-art deep learning-based time series analysis framework \texttt{tsai} for the purpose of comparison. The relevant references and source code for these methods can be accessed at \url{https://timeseriesai.github.io/tsai/}. The rationale behind incorporating a diverse range of ensemble learning and deep learning methods in the aforementioned approaches stems from their remarkable performance across various scenarios in contemporary research. The ensuing table presents the classification accuracy achieved by the five types of methods at four representative points within the intervals.
\begin{table}[H]
\small
\setlength{\abovecaptionskip}{0pt}%
\setlength{\belowcaptionskip}{3pt}%
\centering
\caption{Classification accuracy based on point-value time series methods.} \label{unpointmeth}
\setlength{\tabcolsep}{1mm}{ 
\begin{tabular}{l|cccc|cccc|cccc}
\hline
\multirow{2}{*}{Methods} & \multicolumn{4}{c|}{DGP1} & \multicolumn{4}{c|}{DGP2} & \multicolumn{4}{c}{DGP3}\\
&  \(c\)&\(r\)&\(l\)& \(u\) & \(c\)&\(r\)&\(l\)& \(u\) & \(c\)&\(r\)&\(l\)& \(u\)\\
\hline
KN &19.60&21.40&18.40&20.40&20.60&24.40&18.20&22.60&18.20&22.00&19.60&21.00\\
SD &19.00&19.20&18.34&20.40&19.80&25.60&19.20&23.60&21.80&21.20&19.00&21.20\\
IB &23.60&17.60&21.00&30.80&20.00&23.60&18.80&24.20&19.60&26.00&16.00&24.40\\
ITDE& 19.00&20.80&19.60&23.80&19.60&33.20&21.80&27.60&19.00&32.80&21.80&24.80\\
MrSQM &22.00&18.20&21.60&21.20&21.20&50.80&19.80&44.40&25.80&62.40&18.80&48.40\\
Bagging& 22.40&16.80&21.40&18.60&34.00&42.80&20.60&41.80&27.40&38.80&19.60&43.60\\
CFT &21.60&19.00&82.00&\textbf{86.20}&30.80&45.00&54.00&45.20&31.40&60.60&49.40&65.80\\
WE & 20.20&20.60&19.80&17.20&26.40&38.80&19.80&42.40&25.40&35.60&15.80&43.60\\
\hline 
FCN &22.25 & 24.75 & 83.00& 83.25& 51.75 & 65.25& 64.75& 73.75& 46.00& 71.00& 53.00& 75.25\\
FCNPlus & 21.75&23.50&83.00&84.25&52.50&65.00&64.50&73.75&45.75&72.25&55.25&74.75\\
Inception &22.25&23.75&77.25&80.75&47.75&57.25&58.50&70.25&40.50&64.00&49.25&71.25\\
XCoord & 22.75&21.50&83.75&83.50&49.75&67.00&62.50&73.00&47.00&\textbf{72.75}&54.50&76.00\\
MLP &24.00&23.00&44.50&50.75&23.25&31.00&31.50&29.00&22.00&38.25&23.75&40.50\\
RNN& 22.50&19.75&24.00&28.00&22.50&23.25&22.00&23.25&19.75&22.50&23.25&21.25\\
LSTM &  20.25&20.25&46.00&46.25&32.50&29.75&36.25&41.75&24.00&43.00&37.75&67.00\\
GRU & 21.50&20.25&84.25&85.25&30.00&63.00&44.25&56.25&29.25&64.00&25.50&78.25\\
RNN\_FCN & 22.25&23.75&82.50&81.25&51.25&66.00&62.50&73.00&42.00&67.75&51.75&74.75\\
LSTM\_FCN & 22.25&23.75&81.00&78.50&48.50&62.25&59.75&72.50&48.00&68.75&53.75&\textbf{79.25}\\
GRU\_FCN  & 21.50&23.75&82.00&82.75&50.75&65.00&62.25&71.50&41.50&66.00&53.25&71.00\\
MRNN\_FCN &22.75&24.00&84.00&83.25&50.50&62.75&64.50&72.75&45.25&70.75&51.25&74.75\\
MLSTM\_FCN &23.25&23.75&\textbf{85.75}&80.50&50.00&63.75&62.50&72.75&44.00&69.50&52.50&77.25\\
MGRU\_FCN & 22.25&24.00&82.75&84.75&49.75&64.50&64.25&\textbf{76.00}&42.50&69.25&48.75&72.75\\
ResCNN &24.00&24.00&80.00&82.50&48.50&61.25&61.00&71.00&43.50&67.50&53.50&77.00\\
ResNet & 22.25&22.50&77.25&80.50&48.75&61.00&59.00&67.00&39.00&65.50&49.25&68.50\\
\hline
\end{tabular}
}
\end{table}

From Table \ref{unpointmeth}, it is apparent that both these deep learning and non-deep learning methods exhibit poor performance across the three DGPs. For instance, the optimal performance for DGP1, DGP2, and DGP3 is 86.20\%, 76\%, and 79.25\% respectively, with the majority of the remaining accuracies not even surpassing fifty percent. The primary reason for this might be attributed to our approach considering the interval-valued time series generated with different correlation coefficients as distinct classes, where the data among various classes share significant similarities.

Comparing Table \ref{unRPIRP} and Table \ref{unpointmeth}, it is clearly evident that the accuracy of discriminating interval-valued time series by first transforming them into images and then performing classification surpasses the accuracy achieved by classifying based on representative points. For instance, in the case of DGP1, the optimal accuracy achieved using the imaging method RP is 95\% at \(\nu=15\), while the optimal accuracy using the representative points method is only 86.20\%. Moreover, the classification accuracy remains relatively high for other values of {\(\nu\)} as well. This suggests that simply using representative points within intervals for interval representation and classification is insufficient, as the most crucial point representations vary across different dimensions. Furthermore, the ineffectiveness of most deep learning methods on these DGPs further highlights the superiority of the proposed approach in this paper.

\subsection{Multivariate interval-valued time series classification} \label{sec53}
Using the DGPs of univariate interval-valued time series in Section \ref{sec51}, we construct the following two multivariate interval-valued time series generation processes. In the first scenario (C1), each DGP is treated as a class, and the interval-valued time series generated based on different correlation coefficients \(\rho\) are considered as different dimensions. This approach results in a multivariate interval-valued time series dataset consisting of three classes and five dimensions. In the second scenario (C2), the interval-valued time series generated based on different correlation coefficients \(\rho\) are treated as different classes, and each DGP is considered as a dimension. This setup yields a multivariate interval-valued time series dataset consisting of five classes and three dimensions. The subsequent table presents the classification performance of multivariate interval-valued time series based on JRP for the two scenarios.

\begin{table}[H]
\setlength{\abovecaptionskip}{0pt}%
\setlength{\belowcaptionskip}{3pt}%
\centering
\caption{Classification accuracy based on WS-DAN using JRP} \label{siJRPIJRP}
\setlength{\tabcolsep}{5mm}{
\begin{tabular}{cccccc}
\hline
\multirow{2}{*}{\tabincell{c}{DGP}} & \multicolumn{5}{c}{\(\nu\)(JRP) }\\
\cline{2-6}
&  	\(\nu = 1\) & 	\(\nu = 5\) & 	\(\nu = 10\) & 	\(\nu = 15\) &	\(\nu = 20\)\\
\hline
C1 & 88.33&89.44&95.67&\textbf{96.43}&88.34\\
C2 &86.72&89.58&\textbf{97.22}&92.36&95.83\\
\hline
\end{tabular}
}
\end{table}

Similar to univariate interval-valued time series, we select five types of multivariate point-value time series Classification methods for comparison. However, due to the inapplicability of certain methods to multivariate data, their corresponding results are not presented in the table below.

\begin{table}[H]
\setlength{\abovecaptionskip}{0pt}%
\setlength{\belowcaptionskip}{3pt}%
\centering
\caption{Classification accuracy based on point-value time series methods.} \label{simumuipoint}
\setlength{\tabcolsep}{1mm}{ 
\begin{tabular}{l|cccc|cccc}
\hline
\multirow{2}{*}{Methods} & \multicolumn{4}{c|}{C1} & \multicolumn{4}{c}{C2}\\
&  \(c\)&\(r\)&\(l\)& \(u\) & \(c\)&\(r\)&\(l\)& \(u\) \\
\hline
KN& 37.33&38.67&37.33&37.33&17.00&20.80&18.20&20.40\\
ITDE & 71.00  &70.33&65.67&73.00&18.60&38.20&18.60&30.20\\
Bagging & 82.33&75.67&76.67&68.67&23.20&37.40&21.60&35.80\\
WE& 77.00&69.33&73.67&71.33&21.00&34.80&19.60&41.60\\

\hline 
FCN & 98.75&\textbf{99.5}8&98.75&\textbf{100.0}&51.50&77.00&\textbf{88.50}&\textbf{90.75} \\
Inception & \textbf{99.1}7&99.17&\textbf{99.58}&98.75&52.00&75.50&85.25&86.75\\
XCoord & 97.50&98.33&96.25&\textbf{100.0}&54.50&\textbf{77.50}&84.50&89.25\\
MLP & 37.92&69.17&50.83&66.67&23.75&26.00&23.50&33.75\\
RNN& 66.67&74.17&48.33&65.83&19.75&22.50&24.25&23.25\\
LSTM & 75.42&86.67&63.33&96.25&31.25&43.00&60.50&68.75\\
GRU & 88.75&96.67&74.58&93.75&36.25&75.25&87.25&89.25\\
RNN\_FCN & 97.50&98.75&96.67&98.33&52.25&70.50&77.75&80.50\\
LSTM\_FCN & 94.58&99.58&94.17&99.58&50.00&72.00&80.00&82.25\\
GRU\_FCN  & 95.83&95.83&92.50&99.58&50.50&68.50&80.00&79.00\\
MRNN\_FCN &98.75&97.50&97.50&98.33&\textbf{55.25}&74.75&83.75&84.25\\
MLSTM\_FCN &91.67&96.67&97.50&98.33&54.50&76.00&84.00&84.75\\
MGRU\_FCN &98.75&96.67&93.33&98.75&50.00&77.00&83.25&83.50\\
ResCNN & 97.08&95.00&95.83&99.58&46.75&73.00&84.75&87.75\\
ResNet & 98.33&95.00&95.83&94.58&48.00&71.50&82.00&87.75\\
\hline
\end{tabular}
}
\end{table}

From Table \ref{siJRPIJRP}, it can be observed that the approximation level does not appear to be the sole determinant of the final accuracy of the model. This finding is consistent with the conclusions drawn from the univariate interval-valued time series classification. In both scenarios C1 and C2, the optimal accuracies based on the JRP method are 96.43\% and 97.22\%, respectively. 

From Table \ref{simumuipoint}, it can be observed that non-deep learning methods are almost ineffective for both scenarios C1 and C2 and exhibit minimal classification capability. On the other hand, deep learning-based methods exhibit strong classification performance in C1. For instance, when the representative point is chosen as the upper bound of the interval, the methods like FCN and XCoord achieve perfect accuracy, surpassing the performance of the proposed methods. However, these deep learning-based methods do not perform well in C2. For instance, when the interval center is chosen as the representative point, the accuracy is only around 50\%. This result can be easily explained, as C2 involves categorizing interval-valued time series generated from different correlation coefficients into distinct classes, resulting in high similarity between data from different classes and increasing the difficulty of classification. Comparing the results from Tables \ref{siJRPIJRP} and \ref{simumuipoint}, it can be observed that regardless of the complexity of the classification task, the proposed methods consistently achieve a satisfactory level of classification accuracy. In contrast, both existing deep learning and non-deep learning methods tend to almost fail in certain complex scenarios. This highlights the universality and versatility of the proposed methods.

\section{Real Data Examples}\label{sec6rreal}
Some existing methods \cite{Utkin2016, Wang2017} for classifying interval-valued time series in real data studies adopt an approach that involves converting real point-valued time series into interval-valued time series through a fuzzification process. However, the drawback of these methods is that they introduce certain artificial design and random elements during the interval construction, which makes it difficult to validate the efficacy of the proposed methods using real data. In this section, we validate the effectiveness of our proposed method using naturally occurring interval-valued time series.

We utilize the real-world data obtained from the website \url{https://rp5.ru/}. The website encompasses a comprehensive repository of weather data spanning 241 countries and regions across the globe, capturing records from the earliest available periods. Additionally, the website offers weather forecasts and insights for the upcoming seven days. The database receives the latest data very three hours and is updated eight times per day, and records a total of 28 different weather indicators. From this array, we specifically selected 6 indicators for subsequent classification. These selected indicators include atmospheric temperature at a height of two meters above ground level (measured in degrees Celsius), atmospheric pressure at the weather station level (measured in mmHg), atmospheric pressure at mean sea level (measured in mmHg), relative humidity at a height of two meters above ground level (measured in \%), average wind speed at a height of 10-12 meters above ground level during the ten minutes preceding observation (measured in m/s), and dew point temperature at a height of two meters above ground level (measured in degrees Celsius).

We transform the point-valued time series into interval-valued time series on a daily basis, where each interval spans one day. For a given day, we extract the maximum and minimum values of the observations for each indicator, and use them as as the upper and lower bounds of the corresponding interval, respectively. Upon obtaining the interval-valued time series, we consolidate 30 days' worth of observations into a trajectory, which formed the basis for the classification.

\subsection{Univariate interval-valued time series classification} \label{sec61}
We designate four distinct classes based on geographical locations within China: Harbin (H) and Beijing (B) in the north, Taiyuan (T) in the central region, and Sanya (S) in the south. Our classification analysis involves three specific combinations: H-S, S-T-B, and H-S-T-B. In the classification of univariate interval-valued time series, we construct the classification models for the aforementioned three combinations based on the temperature indicator among the six selected indicators. The choice of the smoothing parameter \(\nu\), and the classification model structure follows the procedure outlined in Section \ref{sec5simulate}. The corresponding classification results for the three combinations are presented in the following table.

\begin{table}[H]
\setlength{\abovecaptionskip}{0pt}%
\setlength{\belowcaptionskip}{3pt}%
\centering
\caption{Classification accuracy based on WS-DAN using RP} \label{realRPIRP}
\setlength{\tabcolsep}{5mm}{
\begin{tabular}{cccccc}
\hline
\multirow{2}{*}{\tabincell{c}{DGP}} & \multicolumn{5}{c}{\(\nu\)(RP) }\\
\cline{2-6}
&  	\(\nu = 1\) & 	\(\nu = 5\) & 	\(\nu = 10\) & 	\(\nu = 15\) &	\(\nu = 20\)\\
H-S & 83.33&91.66&\textbf{92.22}&91.66&91.25 \\
S-T-B & 85.52&\textbf{95.83}&94.44&91.67&83.33\\
H-S-T-B	& 70.00&66.67&71.21&\textbf{83.33}&75.00\\	
\hline
\end{tabular}
}
\end{table}

\begin{table}[H]
\small
\setlength{\abovecaptionskip}{0pt}%
\setlength{\belowcaptionskip}{3pt}%
\centering
\caption{Classification accuracy based on point-value time series methods.} \label{realunpoint}
\setlength{\tabcolsep}{1mm}{ 
\begin{tabular}{l|cccc|cccc|cccc}
\hline
\multirow{2}{*}{Methods} & \multicolumn{4}{c|}{H-S} & \multicolumn{4}{c|}{S-T-B} & \multicolumn{4}{c}{H-S-T-B}\\
&  \(c\)&\(r\)&\(l\)& \(u\) & \(c\)&\(r\)&\(l\)& \(u\) & \(c\)&\(r\)&\(l\)& \(u\)\\
\hline
KN &46.03&55.56&61.90&53.97&25.56&35.34&32.33&27.82&20.53&27.15&22.52&19.87\\
SD&55.56&58.73&60.32&69.84&33.83&48.12&38.35&24.81&23.84&34.43&25.82&21.19\\
IB&61.90&66.67&65.08&68.25&38.35&35.34&39.85&38.35&26.49&33.11&29.14&34.44\\
ITDE&65.08&69.84&61.9&63.49&37.59&33.08&36.84&36.84&25.83&33.11&28.48&29.14\\
MrSQM&82.54&65.08&74.60&76.19&38.35&43.61&50.38&48.87&43.71&31.13&39.07&36.42\\
Bagging&76.19&80.95&74.60&80.95&39.10&45.86&51.88&50.38&36.42&39.74&42.38&42.38\\
CFT&76.19&87.30&84.13&80.95&41.35&54.14&57.14&46.62&39.07&49.01&50.99&39.07\\
WE&79.37&74.60&88.89&74.60&45.11&40.6&45.11&50.38&35.76&39.07&44.37&38.41\\

\hline 
FCN & \textbf{92.00} & 84.00& 90.00& \textbf{92.00}& 55.66& 59.43& 60.37& 51.89& 52.89& 52.07& 57.02& 55.37\\
FCNPlus & 90.00&86.00&90.00&\textbf{92.00}&55.66&58.49&61.32&59.49&50.41&52.07&57.85&55.37\\ 
Inception & 88.00&86.00&84.00&\textbf{92.00}&59.43&48.11&62.26&51.88&51.24&47.11&51.24&47.11\\
XCoord & 86.00&80.00&86.00&90.00&56.60&53.77&55.66&54.72&51.24&53.72&56.20&55.37\\
MLP & 86.00&76.00&84.00&86.00&56.60&43.39&52.83&45.28&47.10&41.32&46.28&40.50\\
RNN& 84.00&76.00&86.00&86.00&43.40&42.45&44.34&43.40&42.15&41.32&42.98&41.32\\
LSTM & 86.00&76.00&86.00&86.00&41.51&47.17&41.51&43.40&40.50&42.98&38.84&42.98\\
GRU & 86.00&76.00&84.00&86.00&44.34&44.34&44.34&40.57&41.32&42.15&43.80&44.63\\
RNN\_FCN & 90.00&82.00&90.00&\textbf{92.00}&58.49&55.66&59.43&54.72&52.89&52.89&55.37&52.89\\
LSTM\_FCN & \textbf{92.00}&82.00&\textbf{92.00}&\textbf{92.00}&55.66&55.66&57.55&56.60&47.93&52.07&53.72&55.37\\
GRU\_FCN &90.00&84.00&90.00&\textbf{92.00}&54.72&55.66&60.38&56.60&52.07&50.41&55.37&52.07\\
MRNN\_FCN &90.00&84.00&86.00&\textbf{92.00}&55.66&56.60&57.55&56.60&49.59&55.37&53.72&51.24\\
MLSTM\_FCN &90.00&84.00&90.00&\textbf{92.00}&58.49&57.55&60.38&54.72&53.72&53.72&55.37&55.37\\
MGRU\_FCN &90.00&84.00&\textbf{92.00}&\textbf{92.00}&56.60&56.60&60.38&59.43&54.55&51.24&54.55&57.02\\
ResCNN &88.00&\textbf{90.00}&\textbf{92.00}&90.00&59.43&58.49&65.09&60.38&53.72&51.24&58.68&54.55\\
ResNet &90.00&86.00&90.00&90.00&59.43&56.60&61.32&56.60&53.72&52.90&56.20&47.93\\
\hline
\end{tabular}
}
\end{table}

From Table \ref{realRPIRP}, it can be observed that the imaging method RP demonstrates good classification capabilities for the first two combinations. For example, under different approximation levels, the accuracy for combination H-S is around 90\%, and the optimal accuracy for S-T-B is 95.83\% when \(\nu=5\). However, for the third combination H-S-T-B, the maximum accuracy is only 83.33\%, which may be that more categories lead to the
increased complexity of the discriminative task. From Table \ref{realunpoint}, it is evident that these deep learning and non-deep learning methods exhibit classification ability only for combination H-S, in which the best accuracy reaches 92\%. However, this accuracy is still lower than the level achieved by the proposed method. For combinations S-T-B and H-S-T-B, these point-based classification methods are almost ineffective, whereas the proposed method in the paper can achieve an accuracy of 83.33\%. These conclusions align with the findings in Section \ref{ss1} for univariate interval-valued time series classification, which indicates that the proposed method can achieve satisfactory performance even in various complex scenarios.

\subsection{Multivariate interval-valued time series classification} \label{sec62}
In this section, all the modeling settings remain consistent with those described in Ssection \ref{sec61}, with the only variation being the inclusion of all six variables introduced at the beginning of Section \ref{sec6rreal} for classification. The detailed results of the classification are provided below.

\begin{table}[H]
\setlength{\abovecaptionskip}{0pt}%
\setlength{\belowcaptionskip}{3pt}%
\centering
\caption{Classification accuracy based on WS-DAN using JRP} \label{realJRPIJRP}
\setlength{\tabcolsep}{5mm}{
\begin{tabular}{cccccc}
\hline
\multirow{2}{*}{\tabincell{c}{DGP}} & \multicolumn{5}{c}{\(\nu\)(JRP) }\\
\cline{2-6}
& \(\nu = 1\) &  \(\nu = 5\) & 	\(\nu = 10\) & 	\(\nu = 15\) &	\(\nu = 20\)\\
\hline		
H-S & 95.83&95.83&93.33&89.58&\textbf{100.00} \\
S-T-B & \textbf{98.96}&97.22&97.22&91.67&93.52\\
H-S-T-B	& 87.5&84.72&\textbf{93.75}&87.5&91.67\\	
\hline
\end{tabular}
}
\end{table}

We apply the same set of multivariate point-valued time series classification methods as described in Section \ref{sec53} to classify the representative points of the intervals. The classification accuracies are provided in the following table.
\begin{table}[H]
\scriptsize
\setlength{\abovecaptionskip}{0pt}%
\setlength{\belowcaptionskip}{3pt}%
\centering
\caption{Classification accuracy based on point-value time series methods.} \label{realmupoint}
\setlength{\tabcolsep}{0.2mm}{ 
\begin{tabular}{l|cccc|cccc|cccc|cccc}
\hline
\multirow{2}{*}{Methods} & \multicolumn{4}{c|}{H-S} & \multicolumn{4}{c|}{S-T-B} & \multicolumn{4}{c|}{H-S-T-B} & \multicolumn{4}{c}{Fine-grained}\\
&  \(c\)&\(r\)&\(l\)& \(u\) & \(c\)&\(r\)&\(l\)& \(u\) & \(c\)&\(r\)&\(l\)& \(u\) & \(c\)&\(r\)&\(l\)& \(u\) \\
\hline
KN &92.06&90.48&88.89&92.06&\textbf{100.0}0&66.17&\textbf{100.00}&98.50&96.03&67.55&96.69&93.38&77.78&94.44&77.78&90.74\\
ITDE &76.19&69.84&80.95&82.54&64.66&42.86&59.40&72.18&52.98&46.36&56.29&58.28&51.85&66.67&59.26&50.00\\
Bagging &73.02&79.37&71.43&73.02&31.58&36.84&36.09&33.83&29.8&37.75&36.42&47.68&83.33&83.33&83.33&83.33\\
WE & 93.65&74.60&87.30&90.48&58.65&48.12& 58.65&56.39&61.59&45.70&58.28&54.97&83.33&83.33&83.33&83.33\\
\hline 
FCN & \textbf{100.0}&\textbf{100.0}&\textbf{100.0}&\textbf{100.0}&\textbf{100.0}&\textbf{100.0}&\textbf{100.0}&\textbf{100.0}&\textbf{99.17}&\textbf{99.17}&96.69&\textbf{99.17}&\textbf{97.62}&95.24&92.86&\textbf{97.62} \\
Inception & 98.00&96.00&98.00&96.00&98.11&94.34&97.17&91.50&97.52&\textbf{99.17}&\textbf{99.17}&98.35&92.86&95.24&88.10&95.24\\
XCoord & 94.00&92.00&94.00&96.00&92.45&95.28&98.11&98.11&98.35&97.52&98.35&97.52&90.48&90.48&90.48&92.86\\
MLP & 88.00&94.00&86.00&98.00&97.17&93.40&84.90&93.40&94.21&96.69&95.04&96.69&85.71&\textbf{97.62}&85.71&95.24\\
RNN& 92.00&94.00&92.00&90.00&89.62&93.40&90.57&95.28&86.78&94.22&92.56&89.27&88.10&95.24&88.10&95.24\\
LSTM & 88.00&94.00&92.00&94.00&91.51&99.06&\textbf{100.0}&98.11&95.04&95.04&95.87&95.87&88.10&97.62&88.10&\textbf{97.62}\\
GRU & 86.00&98.00&92.00&96.00&97.17&98.11&98.11&99.06&94.22&95.04&95.04&95.04&88.10&95.24&88.10&95.24\\
RNN\_FCN & 94.00&96.00&94.00&98.00&99.06&98.11&99.06&98.11&96.70&98.35&98.35&99.17&97.62&97.62&92.86&95.24\\
LSTM\_FCN & 96.00&98.00&98.00&98.00&99.06&99.06&99.06&98.11&96.69&\textbf{99.17}&\textbf{99.17}&\textbf{99.17}&95.24&97.62&92.86&92.86\\
GRU\_FCN  &98.00&98.00&98.00&98.00&99.06&98.11&91.51&85.85&91.74&98.35&97.52&98.35&88.10&88.10&90.48&92.86\\
MRNN\_FCN &94.00&96.00&96.00&98.00&96.23&98.11&87.74&99.06&98.35&98.35&98.35&98.35&88.10&97.62&92.86&\textbf{97.62}\\
MLSTM\_FCN &94.00&96.00&96.00&98.00&92.45&98.11&96.22&99.06&98.35&98.35&98.35&97.52&90.48&95.24&\textbf{95.24}&95.24\\
MGRU\_FCN & 92.00&98.00&96.00&98.00&99.06&98.11&98.11&99.06&98.35&\textbf{99.17}&97.52&\textbf{99.17}&95.24&97.62&92.86&\textbf{97.62}\\
ResCNN &96.00&98.00&96.00&98.00&98.11&98.11&94.34&99.06&95.04&98.35&98.35&98.35&95.24&95.24&90.48&92.86\\
ResNet &96.00&96.00&94.00&96.00&98.11&98.11&98.11&99.06&95.87&95.87&96.69&\textbf{99.17}&95.24&95.24&90.48&95.24\\
\hline
\end{tabular}
}
\end{table}

From Table \ref{realJRPIJRP}, it is evident that the classification accuracy based on JRP is generally above 90\%. At \(\nu=30\), the classification accuracy for combination H-S can reach 100\%, and for the most complex combination H-S-T-B, the accuracy can also be quite high at 93.75\%. Looking at Table \ref{realmupoint}, it is observed that deep learning methods and some non-deep learning methods exhibit strong classification capabilities for all three combinations and outperform the proposed method. Among these methods, FCN and its combinations with other neural networks such as LSTM\_FCN, GRU\_FCN, and MGRU\_FCN demonstrate the best classification performance. Additionally, compared to the results in Section \ref{sec61}, the classification ability of the temperature feature is insufficient, and the classification performance could be significantly improved by utilizing all six variables simultaneously.

\subsection{Fine-grained classification}
In addition to the aforementioned classification tasks, we further explore the fine-grained classification of interval-valued time series to thoroughly evaluate the effectiveness of the proposed method. Fine-grained classification involves handling data with high similarity or correlation during the classification process. For this purpose, we collected six weather indicators from the website \url{https://rp5.ru/} for two time periods: from January 2, 2005, to April 11, 2023, for the Shanghai Baoan station, and from July 7, 2019, to April 11, 2023, for the Shanghai Hongqiao station. These data points were then compressed into multivariate interval-value time series at a daily granularity. Similar to the previous approach, we treated 30 days of observations as a trajectory to construct the multivariate interval-valued time series data for each of the two districts.

Given the proximity of both the Baoan and Hongqiao districts within Shanghai, their geographic closeness results in a relatively similar climate. Thus the six weather indicators exhibit a high degree of correlation, which poses a challenge to classification. Moreover, the available data from the website for the Hongqiao district is considerably less compared to Baoan district, introducing a class imbalance that further complicates the classification task. We present the fine-grained discriminant results based on JRP in the following table. The classification results for the representative points are presented in the latter part of Table \ref{realmupoint}.

\begin{table}[H]
\setlength{\abovecaptionskip}{0pt}%
\setlength{\belowcaptionskip}{3pt}%
\centering
\caption{Classification accuracy based on WS-DAN using JRP} \label{realfenJRPIJRP}
\setlength{\tabcolsep}{5mm}{
\begin{tabular}{ccccc}
\hline
\multicolumn{5}{c}{\(\nu\)(JRP) }\\
\hline
\(\nu = 1\) & 	\(\nu = 5\) & 	\(\nu = 10\) & 	\(\nu = 15\) &	\(\nu = 20\)\\
\hline		
91.67 & 83.34 & 87.5& 91.67&  \textbf{95.83}\\
\hline
\end{tabular}
}
\end{table}
Based on the accuracy of fine-grained classification mentioned above, it can be observed that the image-based classification methods yield commendable results. In the case of JRP-based classification, the optimal accuracy of 95.83\% is achieved when \(\nu = 20 \). Furthermore, methods based on deep learning exhibit strong discriminatory capabilities for representative points, particularly in terms of the range (\(r\)) and upper bounds (\(u\)). Similar to the results in Section \ref{sec62}, the FCN method and its combinations with other network structures demonstrate the best discriminatory performance.

\section{Conclusions}\label{sec7}
In this paper, we introduced classification methods for both univariate and multivariate interval-valued time series based on time series imaging. Extensive simulation studies and real-world data applications have demonstrated the effectiveness of the proposed approach in addressing complex discrimination scenarios, and have showcased its significant superiority over representative point-based classification methods. Especially in more complex scenarios, such as when data from different classes exhibit high similarity, both deep learning and non-deep learning methods fail to perform effectively, but the proposed method continues to achieve a satisfactory level of discriminative accuracy, which highlights the robustness of the proposed methods.

In this study, we have employed the imaging methods RP and JRP for classification. However, there are also ongoing research efforts exploring other time series imaging methods, such as Gramian Angular Summation/Difference Field, and Markov Transition Field \cite{wang2015imaging}. While our approach and theory can naturally extend to these alternative imaging methods, it is important to acknowledge that different imaging techniques capture distinct information from the original data. As a result, their accuracy in practical applications may vary, which requires further research and analysis. While we optimize the convex combination coefficients as learnable parameters, {fundamentally we still use points to represent intervals}. This approach does not fully exploit the information within intervals. Future research could consider extending point-valued time series imaging methods to interval scenarios for classification by utilizing distance metrics between intervals. This would enable a more comprehensive utilization of interval information and potentially lead to further enhancements in discriminative performance.

In conclusion, this work represents an initial attempt where we have integrated deep learning techniques to process interval-valued time series. The realm of point-valued time series methods is extensive and rapidly evolving, requiring us to further harness and integrate these methods effectively.

\section*{Acknowledgments}
\addcontentsline{toc}{section}{Acknowledgments}
The research work described in this paper was supported by the National Natural Science Foundation of China (Nos. 72071008). 

\newpage
\begin{appendix}		
\section{Proofs for Results}\label{appendixA}
\setcounter{lemma}{0}
\renewcommand{\thelemma}{\Alph{section}\arabic{lemma}}

Before proving the theoretical results, we first introduce the fact that the three-dimensional convolution operation can be mapped into two matrix multiplications. For the \(l\)-th layer of the CNNs, the convolution operation is
\[
X_{(l)}(:, :, j) = \Theta^j_{(l)}  \circledast Z_{(l-1)} \in \mathbb{R}^{M_l\times N_l}, \ j= 1,2,\cdots, d_l,
\]
where the notations are described in Section \ref{sec4Theoretical}. {Then we} vectorize the \(d_l\) convolution kernels as rows of what is called the \(l\)-th layer weight matrix \(W_{(l)}\), i.e.,
\begin{equation} \label{kernelrearange}
W_{(l)} = \left[
\begin{array}{c}
\vecn(\Theta^1_{(l)})\\
\vecn(\Theta^2_{(l)})\\
\vdots\\
\vecn(\Theta^{d_l}_{(l)})
\end{array}
\right]^\top\in \mathbb{R}^{k_lk_ld_{l-1} \times d_l}.
\end{equation}

Furthermore, we can vectorize the input \(Z_{(l-1)}\) of the \(l\)-th layer as the feature matrix \(F_{(l)}\), that is,
\begin{equation} \label{featurearange}
\begin{aligned}
F_{(l)} &= \left[
\begin{array}{c}
\vecn(Z_{(l)}(1:k	_l,1:k_l,:))\\
\vecn(Z_{(l)}(2:(k_l+1),2:(k_l+1),:))\\
\vdots\\
\vecn(Z_{(l)}((\widetilde{M}_{(l-1)}-k_l+1):\widetilde{M}_{(l-1)}, (\widetilde{N}_{(l-1)}-k_l+1):\widetilde{N}_{(l-1)}, :))  \\
\end{array}
\right]\\
& \in \mathbb{R}^{(\widetilde{M}_{(l-1)}-k_l+1)(\widetilde{N}_{(l-1)}-k_l+1) \times k_l k_l d_{l-1}}.
\end{aligned}
\end{equation}

Multiplying the feature matrix \(F_{(l)}\) and the weight matrix \(W_{(l)}\) {we obtain} a matrix of size \((\widetilde{M}_{(l-1)}-k_l+1)(\widetilde{N}_{(l-1)}-k_l+1)\times d_l\). {Then by reshaping each of its rows into a matrix of size \((\widetilde{M}_{(l-1)}-k_l+1) \times (\widetilde{N}_{(l-1)}-k_l+1)\) and rearranging them, we can get \(X_{(l)}\).}

\begin{lemma} \label{poolingLipschitz}
If \(\sigma: \mathbb{R}^{M\times N\times d}\to \mathbb{R}^{M\times N\times d}\) is a element-wise \(\eta\)-Lipschitz continuous function, then the composite operation of max pooling with window size \(m\times m\) and \(\sigma(\cdot)\) is \(m^{1/p}\eta\)-Lipschitz continuous with respect to \(\lVert \cdot \rVert_p\) for \(p \geq 1\), where \(m\) is the maximum times an entry in the tensor can be reused in a pooling operation.
\end{lemma}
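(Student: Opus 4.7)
The plan is to prove the lemma in two steps: first establish that max pooling by itself is $m^{1/p}$-Lipschitz with respect to $\|\cdot\|_p$, and then stack it with $\sigma$ via the composition rule for Lipschitz maps.

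For the pooling step, I would denote the max-pool output at location $k$ (over channel $c$) by $y_{k,c} = \max_{j \in W_{k,c}} x_{j,c}$, where $W_{k,c}$ is the index set of the pooling window. The elementary inequality $|\max_j a_j - \max_j b_j| \le \max_j |a_j - b_j|$ gives, for any two inputs $x, x' \in \mathbb{R}^{M\times N \times d}$,
\[
|y_{k,c} - y'_{k,c}| \;\le\; \max_{j \in W_{k,c}} |x_{j,c} - x'_{j,c}|.
\]
Raising to the $p$-th power and bounding the max by the sum yields $|y_{k,c} - y'_{k,c}|^p \le \sum_{j \in W_{k,c}} |x_{j,c} - x'_{j,c}|^p$. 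Summing over all output locations $(k,c)$ and swapping the order of summation, each input coordinate $(j,c)$ is counted at most $m$ times (by the stated reuse bound on the pooling operation), so
\[
\sum_{k,c} |y_{k,c} - y'_{k,c}|^p \;\le\; m \sum_{j,c} |x_{j,c} - x'_{j,c}|^p \;=\; m\,\|x-x'\|_p^p,
\]
which upon taking the $p$-th root gives $\|\mathrm{Pool}(x) - \mathrm{Pool}(x')\|_p \le m^{1/p}\|x - x'\|_p$.

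For the second step, since $\sigma$ acts entrywise and is $\eta$-Lipschitz in each coordinate, a direct coordinate-wise bound and summation yields $\|\sigma(x) - \sigma(x')\|_p \le \eta \|x - x'\|_p$. Composing with the pooling bound,
\[
\|\mathrm{Pool}(\sigma(x)) - \mathrm{Pool}(\sigma(x'))\|_p \;\le\; m^{1/p} \|\sigma(x) - \sigma(x')\|_p \;\le\; m^{1/p}\eta \,\|x - x'\|_p,
\]
which is the claimed bound. There is no serious obstacle here; the only non-routine step is recognizing that the reuse count $m$ arises precisely from swapping the double summation over windows into a single summation over input entries, and using the inequality $\max_j |c_j|^p \le \sum_j |c_j|^p$ to pass from the pointwise max-bound to an $\ell^p$-friendly form. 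Care is needed at the boundary of the feature map (where some entries are reused fewer than $m$ times), but this only strengthens the bound, so taking $m$ as the worst-case reuse suffices.
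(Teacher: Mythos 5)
Your proof is correct and follows essentially the same route as the paper's: bound the difference of maxima by the maximum of the differences, pass from the max to a sum of $p$-th powers, and count that each input entry is reused at most $m$ times across pooling windows. Your two-step organization (pooling alone is $m^{1/p}$-Lipschitz, then compose with the entrywise $\eta$-Lipschitz bound for $\sigma$) is in fact more explicit than the paper's single compressed display, which skips the intermediate inequalities (and even misplaces the power of $\eta$ inside the sum, writing $m\eta\lvert X_{i,j,k}-Y_{i,j,k}\rvert^p$ where $m\eta^p\lvert X_{i,j,k}-Y_{i,j,k}\rvert^p$ is needed for the stated conclusion).
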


\begin{proof}
For any tensor \(X, Y\in\mathbb{R}^{M\times N\times d} \), {we have}
\[
\begin{aligned}
\lVert \tp(\sigma(X)) - \tp(\sigma(Y))\rVert_p &= \left(\sum_{i,j,k} \left|\max \sigma(X(\mathcal{A}_{i,j,k})) - \max \sigma(Y(\mathcal{A}_{i,j,k}))\right|^p \right)^{1/p}\\
& \leq \left(\sum_{i=1}^{M} \sum_{j=1}^{M} \sum_{k=1}^{d}m \eta \left|X_{i,j,k}- Y_{i,j,k}\right|^p \right)^{1/p}\\
&=m^{1/p}\eta \lVert X - Y\rVert_p,
\end{aligned}
\]
where \(\mathcal{A}_{i,j,k}\) denotes the index set of the sub-tensor used for the pooling operation. Various entities in the tensor may be reused multiple times due to the stride in the pooling operation, and this is the cause of the fourth inequality. 
\end{proof}

In Lemma \ref{poolingLipschitz}, if the pooling operation without overlap, that is, all entities are used up to once, then the composite of pooling operation and \(\sigma(\cdot)\) is also \(\eta\)-Lipschitz.

\noindent
{\bf Lemma 4.1.} {\it 
Given dataset \(I = \{(R_i, Y_i)\}^n_{i=1}\), and any margin \(\gamma > 0\), for any \(\delta \in (0, 1)\), with probability at least \(1-\delta\), every \(f \in \mathbb{F}\) satisfies
\[
\mathbb{P}\left(\argmax_{j}f(R; \Theta)_j \neq Y\right) -  \widehat{\mathcal{R}}(f) \leq  2\Re(\mathcal{H}_\gamma|_I) + \sqrt{\frac{\log (1/\delta)}{2n}}
\]
and
\[
\mathbb{P}\left(\argmax_{j}f(R; \Theta)_j \neq Y\right) -  \widehat{\mathcal{R}}(f) \leq  2  \widehat{\Re}(\mathcal{H}_\gamma|_{I})+ 3\sqrt{\frac{\log (2/\delta)}{2n}}.
\]
}
\begin{proof}
Firstly, {we have}
\begin{equation}\label{Firstlyine}
\begin{aligned}
\mathbb{P}\left(\argmax_{i}f(R)_i \neq Y\right) &\leq \mathbb{P}\left(\max_{i\neq y}f(R)_i \geq f(R)_Y\right)\\
& = \mathbb{E}\mathbb{I}\left\{-\mathcal{M}(f(R, Y)) \geq 0\right\}\\
& \leq \mathbb{E}\ell_\gamma(-\mathcal{M}(f(R), Y))\\
&= \mathcal{R}_\gamma(f).
\end{aligned}
\end{equation}

Suppose the data set \(I^\prime\) differs from the data set \(I\) by only one sample, say \((R^\prime_j, Y^\prime_j)\) in \(I^\prime\) and \((R_j, Y_j)\) in \(I\). Then we can obtain
\[
\begin{aligned}
&\left(\sup_{f \in \mathcal{H}}(\mathcal{R}_\gamma(f) - \widehat{\mathcal{R}}_\gamma(f, I))\right) - \left(\sup_{f \in \mathcal{H}}(\mathcal{R}_\gamma(f) - \widehat{\mathcal{R}}_\gamma(f, I^\prime))\right)\\
& \leq \sup_{f \in \mathcal{H}} \left((\mathcal{R}_\gamma(f) - \widehat{\mathcal{R}}_\gamma(f, I)) - (\mathcal{R}_\gamma(f) - \widehat{\mathcal{R}}_\gamma(f, I^\prime))\right)\\
& \leq \frac{1}{n},
\end{aligned}
\]
where the last inequality holds due to \(|\ell_\gamma(\cdot)| \leq 1\). {Similarly, we can infer that}
\[
\sup_{f \in \mathcal{H}}(\mathcal{R}_\gamma(f) - \widehat{\mathcal{R}}_\gamma(f, I^\prime))- \sup_{f \in \mathcal{H}}(\mathcal{R}_\gamma(f) - \widehat{\mathcal{R}}_\gamma(f, I)) \leq 1/n.
\]

{Thus we can get }
\[
\left|\sup_{f \in \mathcal{H}}(\mathcal{R}_\gamma(f) - \widehat{\mathcal{R}}_\gamma(f, I^\prime))- \sup_{f \in \mathcal{H}}(\mathcal{R}_\gamma(f) - \widehat{\mathcal{R}}_\gamma(f, I))\right| \leq 1/n.
\]

For any \(\delta \in (0,1)\), it follows from McDiarmi's inequality that
\begin{equation} \label{McDiarmi}
\sup_{f \in \mathcal{H}}(\mathcal{R}_\gamma(f) - \widehat{\mathcal{R}}_\gamma(f, I)) \leq \mathbb{E}_{I}\left[\sup_{f \in \mathcal{H}}(\mathcal{R}_\gamma(f) - \widehat{\mathcal{R}}_\gamma(f, I))\right] + \sqrt{\frac{\log(2/\delta)}{2n}}.
\end{equation}
holds with probability at least \(1-\delta/2\)

For the expectation term on the right-hand side of Eq.(\ref{McDiarmi}), {since the samples in both \(I\) and \(I^\prime\) are sampled independently and identically distributed and the supremum function is convex, it
follows from Jensen's inequality that}
\begin{equation} \label{sup1}
\begin{aligned}
&\mathbb{E}_{I}\left[\sup_{f \in \mathcal{H}}(\mathcal{R}_\gamma(f) - \widehat{\mathcal{R}}_\gamma(f, I))\right]   =       \mathbb{E}_{I}\left[\sup_{f \in \mathcal{H}} \mathbb{E}_{I^\prime}(\widehat{\mathcal{R}}_\gamma(f, I^\prime) - \widehat{\mathcal{R}}_\gamma(f, I))\right]\\
& \leq  \mathbb{E}_{I,I^\prime}\left[\sup_{f \in \mathcal{H}} (\widehat{\mathcal{R}}_\gamma(f, I^\prime) - \widehat{\mathcal{R}}_\gamma(f, I))\right]\\
& = \mathbb{E}_{I,I^\prime}\left[\sup_{f \in \mathcal{H}}
\frac{1}{n} \sum_{i=1}^{n}\left(\ell_\gamma(-\mathcal{M}(f(R^\prime_i), Y^\prime_i)) - \ell_\gamma(-\mathcal{M}(f(R_i), Y_i))\right)\right]\\
& =\mathbb{E}_{u, I,I^\prime}\left[\sup_{f \in \mathcal{H}}
\frac{1}{n} \sum_{i=1}^{n}u_i\left(\ell_\gamma(-\mathcal{M}(f(R^\prime_i), Y^\prime_i)) - \ell_\gamma(-\mathcal{M}(f(R_i), Y_i))\right)\right]\\
& \leq \mathbb{E}_{u, I^\prime}\left[\sup_{f \in \mathcal{H}}
\frac{1}{n} \sum_{i=1}^{n}u_i\ell_\gamma(-\mathcal{M}(f(R^\prime_i), Y^\prime_i))\right] + \mathbb{E}_{u, I}\left[\sup_{f \in \mathcal{H}}
\frac{1}{n} \sum_{i=1}^{n}-u_i\ell_\gamma(-\mathcal{M}(f(R_i), Y_i))\right]\\
&= 2\mathbb{E}_{u, I}\left[\sup_{f \in \mathcal{H}}
\frac{1}{n} \sum_{i=1}^{n}u_i\ell_\gamma(-\mathcal{M}(f(R_i), Y_i))\right]\\
&=2\Re(\mathcal{H}_\gamma|_{I}). 
\end{aligned}
\end{equation}

Combining Eqs.(\ref{Firstlyine}), (\ref{McDiarmi}) and (\ref{sup1}), {we can infer that}
\[
\begin{aligned}
\mathbb{P}\left[\argmax_{i}f(R)_i \neq Y\right] - \widehat{\mathcal{R}}_\gamma(f, I) &\leq \mathcal{R}_\gamma(f) - \widehat{\mathcal{R}}_\gamma(f, I)\\
&\leq \sup_{f \in \mathcal{H}}(\mathcal{R}_\gamma(f) - \widehat{\mathcal{R}}_\gamma(f, I))\\
& \leq 2\Re(\mathcal{H}_\gamma|_{I}) + \sqrt{\frac{\log(1/\delta)}{2n}}
\end{aligned}
\]
holds with probability at least \(1-\delta\). In a similar vein, we have
\[
\widehat{\Re}(\mathcal{H}|_{I}) - \widehat{\Re}(\mathcal{H}|_{I^\prime}) \leq \mathbb{E}_u \left[\sup_{f \in \mathcal{H}}
\frac{1}{n}u_j\left(\ell_\gamma(-\mathcal{M}(f(R_j), Y_j)) - \ell_\gamma(-\mathcal{M}(f(R^\prime_j), Y^\prime_j))\right)\right]\leq \frac{1}{n}.
\]

By using the dds McDiarmi's inequality again, {it is easy to obtain that} \(\Re(\mathcal{H}_\gamma|_{I}) \leq \widehat{\Re}(\mathcal{H}_\gamma|_{I}) + \sqrt{\frac{\log(2/\delta)}{n}}\) holds with probability at least \(1-\delta/2\), combining with Eq.(\ref{McDiarmi}) the conclusion holds.
\end{proof}

\noindent
{\bf Theorem 4.1}
{\it
The matrices \(W_{(1)} \in \mathbb{R}^{nk_1k_1 \times d_1}\) and \(F_{(0)} \in  \mathbb{R}^{(M_0 -k_1 + 1)(N_0 - k_1 + 1) \times nk_1 k_1}\) are rearrangements of \(\Theta_{(1)}\) and \(Z_{(0)}\) based on equations (1) and (2) in Supplementary Material, respectively. Then
\[
\log \mathcal{N}\left(\left\{\Theta_{(1)} \circledast Z_{(0)} , \lVert W_{(1)} \rVert_2 \leq b_1\right\}, \epsilon, \lVert \cdot \rVert_2\right) \leq \bigg\lceil \frac{\lVert F_{(0)} \rVert_2^2b_1^2d_1}{\epsilon^2}\bigg \rceil\ln(2nk_1k_1d_1).
\]
}

\begin{proof}
{Firstly}, \(Z_{(0)} \in \mathbb{R}^{M_0 \times N_0 \times n }\) {is} the tensor composed of input samples \(\{R_i\}^n_{i=1}\). {Rearrange the convolution kernels according to equations (\ref{kernelrearange}) and (\ref{featurearange})} and input \(Z_{(0)}\) of the 1st layer to obtain \(W_{(1)} \in \mathbb{R}^{nk_1k_1 \times d_1}\) and \(F_{(0)} \in  \mathbb{R}^{(M_0 -k_1 + 1)(N_0 - k_1 + 1) \times nk_1 k_1}\), respectively.
Let \(\widetilde{F}_{(0)}\) be the matrix obtained by rescaling the columns of \(F_{(0)}\), that is, \(\widetilde{F}_{(0)}(:, j) = F_{(0)}(:, j) / \lVert F_{(0)}(:, j) \rVert_2, j = 1,2,\cdots, nk_1k_1\), and define
\[
\begin{aligned}
\{V_1, V_2, \cdots, V_{2n k_1k_1d_1}\} \coloneqq \left\{u\widetilde{F}_{(0)}\ve_i \ve^\top_j:u \in \{-1, 1\},j\in \{1,2,\cdots, d_1\},i\in \{1, 2,\cdots, nk_1k_1\}\right\}.
\end{aligned}
\]

In addition, {the matrix \(B \in \mathbb{R}^{nk_1k_1\times d_1}\) is defined as follows, whose element of the \(j\)-th row is \(\lVert F_{(0)}(:, j) \rVert_2\), i.e.,}
\[
B = \left[
\begin{array}{cccc}
\lVert F_{(0)}(:, 1) \rVert_2 & \lVert F_{(0)}(:, 1) \rVert_2 &\cdots & \lVert F_{(1)}(:, 1) \rVert_2\\
\lVert F_{(0)}(:, 2) \rVert_2 & \lVert F_{(0)}(:, 2) \rVert_2 &\cdots & \lVert F_{(1)}(:, 2) \rVert_2\\
\vdots & \vdots &\ddots & \vdots\\
\lVert F_{(0)}(:, nk_1k_1) \rVert_2 &\lVert F_{(0)}(:, nk_1k_1) \rVert_2 & \cdots & \lVert F_{(0)}(:, nk_1k_1) \rVert_2\\
\end{array}
\right].
\]

Then we have \(F_{(0)} W_{(1)} =\widetilde{F}_{(0)}(B \odot W_{(0)} )\). Using conjugacy of \(\lVert \cdot \rVert_{2, 2}\) and \(\lVert \cdot \rVert_{2, 2}\), {we obtain}
\[
\begin{aligned}
&\lVert B \odot W_{(1)} \rVert_1 \leq \langle B, |W_{(1)}|\rangle \leq \lVert B \rVert_{2, 2} \lVert W_{(1)} \rVert_{2, 2}\\
& = \lVert (\lVert B(:, 1) \rVert_2, \lVert B(:, 2) \rVert_2, \lVert B(:, d_1) \rVert_2) \rVert_2 \lVert W_{(1)} \rVert_{2, 2}\\
& = \lVert (\lVert \lVert \lVert F_{(0)}(:, 1) \rVert_2, \cdots, \lVert F_{(0)}(:, nk_1k_1) \rVert_2  \rVert_2, \lVert B(:, 2) \rVert_2, \lVert B(:, d_1) \rVert_2) \rVert_2 \lVert W_{(1)} \rVert_{2, 2}\\
& = d_1^{1/2} \lVert \lVert F_{(0)}(:, 1) \rVert_2, \cdots, \lVert F_{(0)}(:, nk_1k_1) \rVert_2  \rVert_2\lVert W_{(1)} \rVert_{2, 2}\\
& =d_1^{1/2}\lVert F_{(0)} \rVert_2 \lVert W_{(1)} \rVert_{2, 2}\\
& \leq d_1^{1/2}b_1 \lVert F_{(0)} \rVert_2\coloneqq \bar{a},
\end{aligned}
\]
where \(\lVert B(:, 1) \rVert_2 =\lVert B(:, 2) \rVert_2 = \cdots = \lVert B(:, d_1) \rVert_2\). Furthermore, {we have}
\[
\begin{aligned}
F_{(0)} W_{(1)}&=\widetilde{F}_{(0)}(B \odot W_{(1)}) = \widetilde{F}_{(0)} \sum_{i=1}^{nk_1k_1} \sum_{j=1}^{d_1} (B \odot W_{(1)})(i, j) \ve_i \ve^\top_j\\
& = \lVert B \odot W_{(1)} \rVert_1 \sum_{i=1}^{nk_1k_2} \sum_{j=1}^{d_1} \frac{(B \odot W_{(1)})(i, j)}{\lVert B \odot W_{(1)} \rVert_1} (\widetilde{F}_{(0)}\ve_i \ve^\top_j) \\
& \in \bar{a}\cdot \text{conv}(\{V_1, V_2, \cdots, V_{2nk_1k_1d_1}\}),
\end{aligned}
\]
where \(\text{conv}(\cdot)\) is the convex hull. Using the Lemma 1 in \citet{10.1162/153244302760200713}, there exist nonnegative integers \(\omega_1,\omega_2,\cdots, \omega_{2nk_1k_1d_1}\) with \(\omega = \sum_{i=1}^{2nk_1k_1d_1}\omega_i\), {then we get}
\[
\begin{aligned}
&\bigg \lVert F_{(0)} W_{(1)}  - \frac{\bar{a}}{\omega}\sum_{i=1}^{2nk_1k_1d_1} \omega_iV_i \bigg\rVert^2_2 = \bigg\lVert \widetilde{F}_{(0)}(B \odot W_{(1)}) - \frac{\bar{a}}{\omega}\sum_{i=1}^{2nk_1k_1d_1} \omega_iV_i \bigg\rVert^2_2 \leq \frac{b_1^2 d_1 \lVert F_{(0)} \rVert^2_2}{\omega} \leq \epsilon^2.
\end{aligned}
\]

As \(V_i\) and \(F_{(0)} W_{(1)}\) have the same dimension \((M_0 -k_1 + 1)(N_0 - k_1 + 1) \times d_1\), {by} rearranging these into tensors \(V_i\) of dimension \((M_0 -k_1 + 1) \times (N_0 - k_1 + 1) \times d_1\) , we have 
\[
\begin{aligned}
\bigg\lVert \Theta_{(1)}  \circledast Z_{(0)}- \frac{\bar{a}}{\omega}\sum_{i=1}^{2nk_1k_1d_1}\omega_i V_i \bigg \rVert^2_2 &= \bigg\lVert F_{(0)} W_{(1)}  - \frac{\bar{a}}{\omega}\sum_{i=1}^{2nk_1k_1d_1} \omega_iV_i \bigg\rVert^2_2 \leq \epsilon^2,
\end{aligned}
\]
which implies that the cover element for the convolution operation is thus \(\frac{\bar{a}}{\omega}\sum_{i=1}^{2nk_1k_1d_1}\omega_i V_i\), and
\[
\begin{aligned}
\mathcal{V} \coloneqq \left\{\frac{\bar{a}}{\omega}\sum_{i=1}^{2n k_1k_1d_1}\omega_i V_i:\omega_i\in \mathbb{N}^+, \omega = \sum_{i=1}^{2nk_1k_1d_1}\omega_i\right\} = \left\{\frac{\bar{a}}{\omega}\sum_{j=1}^{\omega} V_{i_j}: (i_i, i_2, \cdots, i_\omega) \in [2nk_1k_1d_1]^\omega\right\}
\end{aligned}
\]
is the desired cover.
\end{proof}

\begin{lemma}
Suppose the non-linear activations of \(i\)th layer are \(\eta_{i}\)-Lipschitz continuous and \(\epsilon_{1}, \epsilon_{2}, \cdots, \epsilon_{L+1}\) denote the cover resolutions of each layer. {Let} 
\[
\epsilon = \eta_{L+1}b_{L+1}\left(\sum_{j\leq L}m^{1/2}_{j}\eta_{j} \epsilon_{j} \prod_{l=j+1}^{i+1} m^{1/2}_{l}\eta_{l} b_{l}\right) + \eta_{L+1} \epsilon_{L+1},
\]
then \(\mathbb{F}\) has the covering number bound
\[
\footnotesize
\begin{aligned}
&\mathcal{N}(\mathbb{F}, \epsilon, \lVert\cdot \rVert_2) \\
&\leq  \sup_{\{\mTheta_{(j)}\}^L_{j=1}, W_{(L+1)}} \mathcal{N}(\left\{W_{(L+1)}[\text{vec}(f^\prime(:, :, :, 1))^\top, \cdots, \text{vec}(f^\prime(:, :, :, n))^\top]: f^\prime\in \mathbb{U}_{L}, \lVert W_{(L+1)} \rVert_\sigma \leq b_{L+1}\right\}, \epsilon_{L+1}, \lVert \cdot \rVert_2)\\
& \times \prod_{i=1}^{L} \sup_{\{\Theta_{(j)}\}^i_{j=1}}\mathcal{N}\left(
\left\{\Theta_{(i)}\circledast \cdots \tp_{(1)}(\sigma_{1}(\Theta_{(1)} \circledast Z_{(0)}) \cdots):\lVert W_{(j)} \rVert_\sigma \leq b_j\right\}, \epsilon_{i}, \lVert \cdot \rVert_2
\right).
\end{aligned}
\]
\end{lemma}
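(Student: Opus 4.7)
The plan is to construct an $\epsilon$-cover of $\mathbb{F}$ recursively, layer by layer, by composing per-layer covers and controlling how each layer's approximation error propagates through the subsequent operations via their Lipschitz constants. Concretely, I would build a candidate approximation $\hat{f}$ to any $f\in \mathbb{F}$ as follows. First, cover the first convolutional layer's output class $\{\Theta_{(1)} \circledast Z_{(0)} : \|W_{(1)}\|_\sigma \le b_1\}$ at resolution $\epsilon_1$ to obtain $\hat{X}_{(1)}$, then set $\hat{Z}_{(1)} = \text{Pool}(\sigma_1(\hat{X}_{(1)}))$. By Lemma~A.1 the pooling-activation composition at layer $1$ is $m_1^{1/2}\eta_1$-Lipschitz in $\|\cdot\|_2$, so $\|Z_{(1)} - \hat{Z}_{(1)}\|_2 \le m_1^{1/2}\eta_1 \epsilon_1$. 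Conditioned on this fixed $\hat{Z}_{(1)}$, cover $\{\Theta_{(2)} \circledast \hat{Z}_{(1)} : \|W_{(2)}\|_\sigma \le b_2\}$ at resolution $\epsilon_2$, and continue inductively up through layer $L$, then cover the fully connected layer at resolution $\epsilon_{L+1}$.

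The key inductive step at layer $i$ is a triangle-inequality decomposition. Writing $\hat{X}_{(i)}$ for the cover element on input $\hat{Z}_{(i-1)}$ and using that the convolution $\Theta_{(i)} \circledast \cdot$ is $b_i$-Lipschitz (since $\|W_{(i)}\|_\sigma \le b_i$ and the convolution is realized as multiplication by $W_{(i)}$ after the rearrangement in equations~(1)--(2)), I get
\begin{equation*}
\|X_{(i)} - \hat{X}_{(i)}\|_2 \;\le\; \|\Theta_{(i)}\circledast Z_{(i-1)} - \Theta_{(i)}\circledast \hat{Z}_{(i-1)}\|_2 + \epsilon_i \;\le\; b_i\|Z_{(i-1)} - \hat{Z}_{(i-1)}\|_2 + \epsilon_i,
\end{equation*}
and another application of Lemma~A.1 gives $\|Z_{(i)} - \hat{Z}_{(i)}\|_2 \le m_i^{1/2}\eta_i(b_i\|Z_{(i-1)} - \hat{Z}_{(i-1)}\|_2 + \epsilon_i)$. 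Unrolling this recursion yields $\|Z_{(L)} - \hat{Z}_{(L)}\|_2 \le \sum_{j\le L} m_j^{1/2}\eta_j \epsilon_j \prod_{l=j+1}^{L} m_l^{1/2}\eta_l b_l$. Applying the same decomposition once more across the fully connected layer $W_{(L+1)}\text{vec}(\cdot)$ followed by its $\eta_{L+1}$-Lipschitz activation, the total error inherited from the convolutional stack picks up the amplification factor $\eta_{L+1}b_{L+1}$ and the FC cover contributes an additive $\eta_{L+1}\epsilon_{L+1}$, which reproduces exactly the stated expression for $\epsilon$.

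For the cardinality count, since at each layer the class being covered depends on the previously chosen cover element, the number of candidate $\hat{f}$ is upper-bounded by the product of per-layer covering numbers, each taken as a supremum over admissible previous-layer weight choices; this yields the product-of-suprema bound in the statement. The FC layer's cover is taken over all $f'$ in the convolutional hypothesis class $\mathbb{U}_L$ of the previous layers (hence the sup there too).

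The bulk of the work is purely mechanical once the induction is set up; the main obstacle is the careful bookkeeping of (a) which Lipschitz factor ($b_i$ versus $m_i^{1/2}\eta_i$) is applied at which point in the error propagation, and (b) the fact that the per-layer cover is for the class conditioned on a fixed previous-cover element, so the overall cardinality must be a product of suprema of conditional covering numbers rather than a supremum of a joint cover. A subsidiary point is justifying that the convolution operator, after the rearrangement into matrix multiplication described at the start of the appendix, genuinely has operator norm bounded by $\|W_{(i)}\|_\sigma \le b_i$ with respect to $\|\cdot\|_2$ on the tensor outputs; this is routine given the unfolding but needs to be stated so that the Lipschitz step in the induction is sound.
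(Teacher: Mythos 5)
Your proposal is correct and follows essentially the same route as the paper's proof: the same layer-by-layer inductive construction of conditional covers, the same triangle-inequality decomposition $\lVert Z_{(i)}-\widehat{Z}_{(i)}\rVert_2 \le m_i^{1/2}\eta_i\left(b_i\lVert Z_{(i-1)}-\widehat{Z}_{(i-1)}\rVert_2+\epsilon_i\right)$ via Lemma A.1, the same unrolling to obtain $\epsilon$, and the same product-of-suprema cardinality count. Your closing remark about verifying that the unfolded convolution is genuinely $b_i$-Lipschitz in $\lVert\cdot\rVert_2$ is a point the paper also leaves implicit, so no gap relative to the paper's own argument.
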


\begin{proof} Firstly, we define
\[
\mathbb{U}_i = \left\{\Theta_{(i)}\circledast \cdots \tp_{(1)}(\sigma_{1}(\Theta_{(1)} \circledast Z_{(0)}) \cdots):\lVert W_{(j)} \rVert_\sigma \leq b_j, j = 1,2,\cdots, i\right\},
\]
{for} \(i = 1,2,\cdots, L\), and define
\[
\mathbb{U} = \left\{ W_{(L+1)}[\text{vec}(f^\prime(:, :, :, 1))^\top, \cdots, \text{vec}(f^\prime(:, :, :, n))^\top]: f^\prime\in \mathbb{U}_{L}, \lVert W_{(L+1)} \rVert_\sigma \leq b_{L+1}\right\}.
\]

Then, we construct the covers \(\mathcal{V}_1, \mathcal{V}_2, \cdots, \mathcal{V}_L, \mathcal{V}\) of \(\mathbb{U}_1, \mathbb{U}_2, \cdots, \mathbb{U}_L, \mathbb{U}\) by induction. For the \(\epsilon_{1}\)-cover \(\mathcal{V}_1\) of \(\mathbb{U}_1\), it is intuitive to have
\[
|\mathcal{V}_1|\leq \mathcal{N}(\{\mathbb{U}_1, \epsilon_{1}, \lVert\cdot \rVert_2\}) \coloneqq N_{\epsilon_{1}}.
\]

For any \(f^\prime \in \mathbb{U}_i\), {we can} construct an \(\epsilon_{i+1}\)-cover \(\mathcal{V}_{i+1}(f^\prime)\) of \(\mathbb{U}_{i+1}\). This corresponds to fixing \(\Theta_{(1)}, \Theta_{(2)}, \cdots, \Theta_{(i)}\) in the construction of \(\epsilon_{i+1}\)-cover \(\mathcal{V}_{i+1}(f)\). {Then we obtain}
\[
|\mathcal{V}_{i+1}(f^\prime)| \leq \sup_{\{\Theta_{(j)}\}^i_{j=1}} \mathcal{N}\left\{\left\{\Theta_{(i+1)}\circledast \tp_{(i)}(\sigma_{i}(f^\prime)):\lVert W_{(i+1)} \rVert_\sigma \leq s_{i+1} \right\}, \epsilon_{i+1}, \lVert \cdot\rVert_2\right\} \coloneqq N_{\epsilon_{i+1}},
\]
and \(\mathbb{U}_{i+1} = \cup_{f^\prime \in \mathbb{U}_i} \mathcal{V}_{i+1}(f^\prime)\), \(|\mathbb{U}_{i+1}| \leq |\mathbb{U}_{i}| \cdot N_{\epsilon_{i+1}} \leq \prod_{j=1}^{i+1}N_{\epsilon_{j+1}}\) {immediately}. As follows, we give the formulation of cover resolutions \(\epsilon\) and show \(\lVert Z_{(L+1)} -\widehat{Z}_{(L+1)} \rVert_2\) with \(\widehat{\mZ}_{(L+1)} \in \mathbb{U}\). When \(i \leq L-1\), for the \(i\)th layer of CNN, we inductively construct \(\widehat{X}_{(i)}\) and \(\widehat{Z}_{(i)}\) {by} 
\[
\widehat{Z}_{(0)} = Z_{(0)},\  \lVert \mTheta_{(i)} \circledast \widehat{Z}_{(i-1)} - \widehat{X}_{(i)}\rVert_2 \leq \epsilon_{i},\ \widehat{Z}_{(i)} = \tp_{(i)}(\sigma_{i}(\widehat{X}_{(i)})),
\]
where \(\widehat{X}_{(i)} \in \mathbb{U}_i\). {Then it follows from Lemma \ref{poolingLipschitz} that}
\[
\begin{aligned}
\lVert Z_{(i+1)} -\widehat{Z}_{(i+1)} \rVert_2 &= \lVert \tp_{(i+1)}(\sigma_{i+1}(X_{(i+1)})) -\tp_{(i+1)}(\sigma_{i+1}(\widehat{X}_{(i+1)})) \rVert_2\\
& \leq m^{1/2}_{i+1}\eta_{i+1} \lVert X_{(i+1)} -\widehat{X}_{(i+1)} \rVert_2\\
& \leq m^{1/2}_{i+1}\eta_{i+1} \lVert X_{(i+1)} - \Theta_{(i+1)} \circledast \widehat{Z}_{(i)} \rVert_2 + m^{1/2}_{i+1}\eta_{i+1} \lVert \Theta_{(i+1)} \circledast \widehat{Z}_{(i)} - \widehat{X}_{(i+1)} \rVert_2\\
& \leq m^{1/2}_{i+1}\eta_{i+1} b_{i+1} \left(\sum_{j\leq i}m^{1/2}_{j}\eta_{j} \epsilon_{j} \prod_{l=j+1}^{i} m^{1/2}_{l}\eta_{l} b_{l}\right) + m^{1/2}_{i+1}\eta_{i+1} \epsilon_{i+1}\\
& = \sum_{j\leq i+1}m^{1/2}_{j}\eta_{j} \epsilon_{j} \prod_{l=j+1}^{i+1} m^{1/2}_{l}\eta_{l} b_{l}.
\end{aligned}
\]

In the case of \(i = L\), {we have}

\[
\begin{aligned}
\lVert Z_{(L+1)} &-\widehat{Z}_{(L+1)} \rVert_2 = \lVert \sigma_{L+1}(X_{(L+1)}) -\sigma_{L+1}(\widehat{X}_{(L+1)}) \rVert_2\\
& \leq \eta_{L+1} \lVert X_{(L+1)} -\widehat{X}_{(i+1)} \rVert_2\\
& \leq \eta_{L+1} b_{L+1} \lVert Z_{(L)}- \widehat{Z}_{(L)}\rVert_2 +\eta_{L+1} \epsilon_{L+1}\\
& \leq \eta_{L+1}b_{L+1}\left(\sum_{j\leq L+1}m^{1/2}_{j}\eta_{j} \epsilon_{j} \prod_{l=j+1}^{i+1} m^{1/2}_{l}\eta_{l} b_{l}\right) + \eta_{L+1} \epsilon_{L+1}.	
\end{aligned}
\]
\end{proof}

{\bf Theorem 4.2}
{
\it Assume that the non-linear activations of the \(i\)th layer is \(\eta_{i}\)-Lipschitz continuous and the number of reuses of each element in the convolution operation of each layer is \(m_i, i = 1,2,\cdots, m_L\) and \(m_{L+1} = 1\). Then the covering number of CNNs class \(\mathbb{F}\) satisfies
\[
\ln \mathcal{N}(\mathbb{F}, \epsilon, \lVert \cdot \rVert_2) \leq \ln\left(2C\widetilde{M}_{L} \widetilde{N}_{L} d_{L} \vee \max_{i \leq L} 2d_ik^2_i\right)  \frac{\lVert Z^k_{(0)} \rVert^2_2 \prod_{j\leq L+1} m_j \eta^2_{j} b^2_j}{\epsilon^2}(L+1)^3.
\]
}

\begin{proof}
{Now we consider the covering} number of the entire CNNs class \(\mathbb{F}\). By utilizing the relationship between the above CNNs classes covering numbers, {we obtain}
\[
\footnotesize
\begin{aligned}
&\ln \mathcal{N}(\mathbb{F}, \epsilon, \lVert\cdot \rVert_2) \\
&\leq  \sup_{\{\Theta_{(j)}\}^L_{j=1}, W_{(L+1)}} \ln \mathcal{N}(\left\{W_{(L+1)}[\text{vec}(f^\prime(:, :, :, 1))^\top, \cdots, \text{vec}(f^\prime(:, :, :, n))^\top]: f^\prime\in \mathbb{U}_{L}, \lVert W_{(L+1)} \rVert_\sigma \leq b_{L+1}\right\}, \epsilon_{L+1}, \lVert \cdot \rVert_2)\\
& + \sum_{i=1}^{L} \sup_{\{\Theta_{(j)}\}^i_{j=1}} \ln\mathcal{N}\left(
\left\{\Theta_{(i)}\circledast \cdots \tp_{(1)}(\sigma_{1}(\Theta_{(1)} \circledast Z_{(0)}) \cdots):\lVert W_{(j)} \rVert_\sigma \leq b_j\right\}, \epsilon_{i}, \lVert \cdot \rVert_2
\right)\\
& =  \sup_{\{\Theta_{(j)}\}^L_{j=1}, W_{(L+1)}} \ln \mathcal{N}(\left\{W_{(L+1)}[\text{vec}(f^\prime(:, :, :, 1))^\top, \cdots, \text{vec}(f^\prime(:, :, :, n))^\top]: f^\prime\in \mathbb{U}_{L}, \lVert W_{(L+1)} \rVert_\sigma \leq b_{L+1}\right\}, \epsilon_{L+1}, \lVert \cdot \rVert_2)\\
&+\sum_{i=1}^{L} \sup_{\{\Theta_{(j)}\}^i_{j=1}} \ln \mathcal{N}\left\{\left\{\text{Rearrange}(W_{(i)}\times F_{(i-1)}):\lVert W_{(i)} \rVert_\sigma \leq b_{i}, f\in \mathbb{U}_{i-1}\right\}, \epsilon_{i}, \lVert \cdot\rVert_2\right\}\\
& \leq \ln\left(2C\widetilde{M}_{L} \widetilde{N}_{L} d_{L} \vee \max_{i \leq L} 2d_ik^2_i\right) \sum_{k=1}^{n}\sum_{i=1}^{L+1} \sup_{\{\Theta_{(j)}\}^L_{j=1}}\frac{b_i^2 \lVert Z^k_{(i-1)} \rVert^2_2 }{\epsilon^2_{i}}\\
& \leq \ln\left(2C\widetilde{M}_{L} \widetilde{N}_{L} d_{L} \vee \max_{i \leq L} 2d_ik^2_i\right) \lVert Z^k_{(0)} \rVert^2_2 \left(\frac{\eta^2_{L+1}b^2_{L+1} \prod_{j\leq L} m_j \eta^2_{j} b^2_j}{\epsilon^2_{L+1}} + \sum_{i=1}^{L}\frac{ \prod_{j\leq i} m_j \eta^2_{j} b^2_j}{\epsilon^2_{i}}\right)\\
&= \ln\left(2C\widetilde{M}_{L} \widetilde{N}_{L} d_{L} \vee \max_{i \leq L} 2d_ik^2_i\right) \lVert Z^k_{(0)} \rVert^2_2 \sum_{i=1}^{L+1}\frac{ \prod_{j\leq i} m_j \eta^2_{j} b^2_j}{\epsilon^2_{i}} \qquad  (\text{set } m_{L+1} = 1).
\end{aligned}
\]

{Specially, we set}
\[
\epsilon_{i} \coloneqq \frac{\zeta_i \epsilon}{\prod_{j>i} m_j \eta^2_{j}b^2_j},\  \zeta_i = \frac{1}{L+1}, \ \zeta = \sum_{i=1}^{L+1}\zeta_i.
\]

{Then we have}
\[
\begin{aligned}
&\ln\left(2C\widetilde{M}_{L} \widetilde{N}_{L} d_{L} \vee \max_{i \leq L} 2d_ik^2_i\right) \lVert Z^k_{(0)} \rVert^2_2 \sum_{i=1}^{L+1}\frac{ \prod_{j\leq i} m_j \eta^2_{j} b^2_j}{\epsilon^2_{i}}\\
&=\ln\left(2C\widetilde{M}_{L} \widetilde{N}_{L} d_{L} \vee \max_{i \leq L} 2d_ik^2_i\right)  \frac{\lVert Z^k_{(0)} \rVert^2_2 \prod_{j\leq L+1} m_j \eta^2_{j} b^2_j}{\epsilon^2}(L+1)^3.
\end{aligned}
\]
\end{proof}

\noindent
{\bf Theorem 4.3}
{\it
Assume that the conditions in Lemma \ref{Rademachercovering} and Theorem \ref{coveringbound} both hold. Then, for any \(\delta \in (0, 1)\), with probability of at least \(1-\delta\), every \(f \in \mathbb{F}\) satisfies
\[
\mathbb{P} \left(\argmax_{i}f(R)_i \neq Y\right)-  \widehat{\mathcal{R}}_\gamma(f) \leq  \frac{24\sqrt{Q}}{n}\left(1 + \ln\left(n/(3\sqrt{Q})\right)\right)+ \sqrt{\frac{\log (1/\delta)}{2n}},
\]
where \(Q = \ln\left(2C\widetilde{M}_{L} \widetilde{N}_{L} d_{L} \vee \max_{i \leq L} 2d_ik^2_i\right)  \frac{4\lVert \mZ^k_{(0)} \rVert^2_2 \prod_{j\leq L+1} m_j \eta^2_{j} b^2_j}{\gamma^2}(L+1)^3\).
}
\begin{proof}
Since function class \(\mathcal{H}_\gamma\) is \(2/\gamma\)-Lipschitz with respect to \(\lVert \cdot \rVert_2\) and the network class \(\mathcal{H}_\gamma\) satisfies the conditions in Theorem 3.2, {we have}
\[
\ln\mathcal{N}\left((\mathcal{H}_\gamma)_I, \epsilon, \lVert \cdot \rVert_2\right) \leq\ln\left(2C\widetilde{M}_{L} \widetilde{N}_{L} d_{L} \vee \max_{i \leq L} 2d_ik^2_i\right)  \frac{4\lVert Z^k_{(0)} \rVert^2_2 \prod_{j\leq L+1} m_j \eta^2_{j} b^2_j}{\gamma^2	\epsilon^2}(L+1)^3.
\]

According to Lemma 3.2, that is, the relationship between the Rademacher complexity and the covering number, {we can get}
\[
\footnotesize
\begin{aligned}
\Re(\mathcal{H}_\gamma|_{I})  &\leq \inf_{\alpha>0}\left(\frac{4\alpha}{n} + \frac{12}{n}\int_{\alpha}^{\sqrt{n}}\sqrt{\log \mathcal{N}(\mathcal{H}_\gamma|_I,\lVert \cdot \rVert_2, \epsilon)}d \epsilon\right)\\
& \leq \inf_{\alpha>0}\left(\frac{4\alpha}{n} + \frac{12}{n}\int_{\alpha}^{\sqrt{n}}\sqrt{\ln\left(2C\widetilde{M}_{L} \widetilde{N}_{L} d_{L} \vee \max_{i \leq L} 2d_ik^2_i\right)  \frac{4\lVert Z^k_{(0)} \rVert^2_2 \prod_{j\leq L+1} m_j \eta^2_{j} b^2_j}{\gamma^2	\epsilon^2}(L+1)^3}d \epsilon\right)\\
& = \inf_{\alpha>0}\left(\frac{4\alpha}{n} + \frac{24}{n} \left(\ln\left(2C\widetilde{M}_{L} \widetilde{N}_{L} d_{L} \vee \max_{i \leq L} 2d_ik^2_i\right)  \frac{\lVert Z^k_{(0)} \rVert^2_2 \prod_{j\leq L+1} m_j \eta^2_{j} b^2_j}{\gamma^2}(L+1)^3\right)^{1/2} \ln(\sqrt{n}/\alpha)\right),
\end{aligned}
\]
{where the} inf is uniquely minimized at 
\[
\alpha = 3 \sqrt{\ln\left(2C\widetilde{M}_{L} \widetilde{N}_{L} d_{L} \vee \max_{i \leq L} 2d_ik^2_i\right)  \frac{4\lVert Z^k_{(0)} \rVert^2_2 \prod_{j\leq L+1} m_j \eta^2_{j} b^2_j}{\gamma^2	n}(L+1)^3}.
\]
\end{proof}

\section{The partial derivatives involved in Section \ref{sec3ADMM}} \label{appendixB}
This section presents the specific expressions of the partial derivatives involved in Section \ref{sec3ADMM}. The specific expression of the partial derivative \(\partial \lVert \vec{C}_{i,j} -\vec{C}_{i, k} \rVert_2 / \partial \alpha_t\) varies for different \(j, k,\) and \(t\). For the sake of simplicity in notation, we omit the index \(i\) referring to the samples as subscripts, {and next we will discuss each case separately}. For \(|j - k| > m\), if \(t = j, j +\kappa, \cdots, j + (m-1)\kappa \), then we have
\[
\frac{\partial \lVert \vec{C}_{j} -\vec{C}_{k} \rVert_2}{\partial \alpha_t} = \frac{(C^l_{j, t} - C^u_{j, t})\left((\alpha_t C^l_{j, t} - (1-\alpha_t)C^u_{j, t}) (\alpha_{k+t-j}C^l_{k, k+t-j} - (1-\alpha_{k+t-j})C^u_{k, k+t-j})\right)}{\lVert \vec{C}_{j} -\vec{C}_{k} \rVert_2}.
\]

If \( t= k, k + \kappa, \cdots, k + (m-1)\kappa\), then we have
\[
\frac{\partial \lVert \vec{C}_{j} -\vec{C}_{k} \rVert_2}{\partial \alpha_t} =\frac{(C^l_{k, t} - C^u_{k, t})\left((\alpha_t C^l_{k, t} - (1-\alpha_t)C^u_{k, t}) (\alpha_{j+t-k}C^l_{k, j+t-k} - (1-\alpha_{j+t-k})C^u_{k, j+t-k})\right)}{\lVert \vec{C}_{j} -\vec{C}_{k} \rVert_2},
\]
and otherwise \(\partial \lVert \vec{C}_{j} -\vec{C}_{k} \rVert_2 /\partial \alpha_t = 0\).   For \(|j-k|\leq m\), if \(t = j\wedge k, j\wedge k + \kappa, \cdots, j \vee k - 1\), then we have
\begin{footnotesize}
	\[
	\frac{\partial \lVert \vec{C}_{j} -\vec{C}_{k} \rVert_2}{\partial \alpha_t}= \frac{(C^l_{j\wedge k, t} - C^u_{j\wedge k, t})\left((\alpha_tC^l_{j\wedge k, t} - (1-\alpha_t)C^u_{j\wedge k, t}) (\alpha_{|k-j|+t}C^l_{j \vee k, |k-j|+t} - (1-\alpha_{|k-j|+t})C^u_{j \vee k, |k-j|+t})\right)}{\lVert \vec{C}_{j} -\vec{C}_{k} \rVert_2}.
	\]
	
\end{footnotesize}
If \(t = j\vee k, j\vee k + \kappa, \cdots, j \wedge k + (m-1)\kappa\), then we have
\begin{footnotesize}
	\[
	\begin{aligned}
		\frac{\partial \lVert \vec{C}_{j} -\vec{C}_{k} \rVert_2}{\partial \alpha_t} &= \frac{(C^l_{j\vee k, t} - C^u_{j\vee k, t})\left((\alpha_tC^l_{j\vee k, t} - (1-\alpha_t)C^u_{j\vee k, t}) (\alpha_{j-k+t}C^l_{j \wedge k, j-k+t} - (1-\alpha_{j-k+t})C^u_{j \wedge k,j-k+t})\right)}{\lVert \vec{C}_{j} -\vec{C}_{k} \rVert_2}\\
		& + \frac{(C^l_{j\wedge k, t} - C^u_{j\wedge k, t})\left((\alpha_tC^l_{j\wedge k, t} - (1-\alpha_t)C^u_{j\wedge k, t}) (\alpha_{k-j+t}C^l_{j \vee k, k-j+t} - (1-\alpha_{k-j+t})C^u_{j \vee k, k-j+t})\right)}{\lVert \vec{C}_{j} -\vec{C}_{k} \rVert_2}.
	\end{aligned}
	\]
	
\end{footnotesize}	
If \(t = j \wedge k + (m-1)\kappa + 1, j \wedge k + m\kappa,  \cdots, j \vee k + (m-1)\kappa\), then we have
\begin{footnotesize}
	\[
	\frac{\partial \lVert \vec{C}_{i,j} -\vec{C}_{i, k} \rVert_2}{\partial \alpha_t} = \frac{(C^l_{j\vee k, t} - C^u_{j\vee k, t})\left((\alpha_tC^l_{j\vee k, t} - (1-\alpha_t)C^u_{j\vee k, t}) (\alpha_{|k-j|+t}C^l_{j \wedge k, |k-j|+t} - (1-\alpha_{|k-j|+t})C^u_{j \wedge k, |k-j|+t})\right)}{\lVert \vec{C}_{j} -\vec{C}_{k} \rVert_2},
	\]
\end{footnotesize}
and otherwise \(\partial\lVert \vec{C}_{j} -\vec{C}_{k} \rVert_2 /\partial \alpha_t = 0\). For multivariate interval-valued time series, considering the approximation of the Heaviside function, the specific expression for the partial derivative \(\partial R_i(k, h) / \partial \alpha_t\) is given by
\[
\begin{aligned}
	&\frac{\partial R_i(k, h)}{\partial \alpha_t} \\
	&= \lim_{\nu \to \infty}\frac{1}{2^p}  \frac{\partial \prod_{j=1}^{p} R_{i,j}(k, h)}{\partial \alpha_t}
	=\lim_{\nu \to \infty}\frac{1}{2^p}  \frac{\partial \prod_{j=1}^{p} (1+ \tanh(\nu (\epsilon_{i,j}- \lVert \vec{D}_{i,j,k} -\vec{D}_{i,j,h} \rVert_2)))}{\partial \alpha_t}\\
	&= \lim_{\nu \to \infty}\frac{\prod_{j\neq t} (1+ \tanh(\nu (\epsilon_{i,j}- \lVert \vec{D}_{i,j,k} -\vec{D}_{i,j,h} \rVert_2)))}{2^N} \frac{\partial (1+ \tanh(\nu (\epsilon_{i,t}- \lVert \vec{D}_{i,t,k} -\vec{D}_{i,t,h} \rVert_2)))}{\partial \alpha_t}\\
	&=-\lim_{\nu \to \infty}\frac{\prod_{j\neq t} (1+ \tanh(\nu (\epsilon_{i,j}- \lVert \vec{D}_{i,j,k} -\vec{D}_{i,j,h} \rVert_2)))}{2^N} \\
	&\times\frac{\nu}{1 + \cosh(2\nu (\epsilon_{i,t}- \lVert \vec{D}_{i,t,k} -\vec{D}_{i,t,h} \rVert_2))} \frac{1}{\lVert \vec{D}_{i,j,k} -\vec{D}_{i,j,h} \rVert_2}\\
	&  \times \sum_{k,h} 2\left(\alpha_t(D^l_{i, t, k} - D^u_{i, t, h}) + (1-\alpha_t)(D^l_{i, t, k} - D^u_{i, t, h})\right) \left((D^l_{i, t, k} - D^u_{i, t, h}) - (D^l_{i, t, k} - D^u_{i, t, h})\right).
\end{aligned}
\]

\end{appendix}

\bibliography{./bib/reference.bib}
\bibliographystyle{elsarticle-num-names}

\end{document}